\theoremstyle{plain}
\newtheorem{theorem}{Theorem}
\newtheorem{proposition}[theorem]{Proposition}
\newtheorem{definition}[theorem]{Definition}
\newtheorem{assumption}[theorem]{Assumption}
\newcommand{\entity}{\mathcal{E}}
\newcommand{\relation}{\mathcal{R}}
\newcommand{\efok}{\textsc{EFO}_k}
\title{$\efok$-CQA: Towards Knowledge Graph \\ Complex Query Answering beyond Set Operation}
\author{Hang Yin\\
  Department of Mathematical Sciences\\
  Tsinghua University \\
  \texttt{h-yin20@mails.tsinghua.edu.cn}\\
   \And
   Zihao Wang \\
  Department of CSE\\
  HKUST\\
  \texttt{zwanggc@cse.ust.hk} \\
  \And
  Weizhi Fei \\
  Department of Mathematical Sciences\\
  Tsinghua University \\
  \texttt{fwz22@mails.tsinghua.edu.cn} \\
  \And
  Yangqiu Song\\
  Department of CSE\\
  HKUST\\
  \texttt{yqsong@cse.ust.hk}}
\begin{document}

\maketitle

\begin{abstract}
To answer complex queries on knowledge graphs, logical reasoning over incomplete knowledge is required due to the open-world assumption. Learning-based methods are essential because they are capable of generalizing over unobserved knowledge. Therefore, an appropriate dataset is fundamental to both obtaining and evaluating such methods under this paradigm. In this paper, we propose a comprehensive framework for data generation, model training, and method evaluation that covers the combinatorial space of Existential First-order Queries with multiple variables ($\efok$). The combinatorial query space in our framework significantly extends those defined by set operations in the existing literature. Additionally, we construct a dataset, $\efok$-CQA, with 741  types of query for empirical evaluation, and our benchmark results provide new insights into how query hardness affects the results. Furthermore, we demonstrate that the existing dataset construction process is systematically biased that hinders the appropriate development of query-answering methods, highlighting the importance of our work. Our code and data are provided in~\url{https://github.com/HKUST-KnowComp/EFOK-CQA}.
\end{abstract}

\section{Introduction}
The Knowledge Graph (KG) is a powerful database that encodes relational knowledge into a graph representation~\cite{vrandecic_wikidata_2014,suchanek_yago_2007}, supporting downstream tasks~\cite{zhou_bipartite_2007,ehrlinger_towards_2016} with essential factual knowledge. However, KGs suffer from incompleteness during its construction~\cite{vrandecic_wikidata_2014,carlson_toward_2010}, which is formally acknowledged as Open World Assumption (OWA)~\cite{libkin_open_2009}. 
The task of Complex Query Answering (CQA) proposed recently has attracted much research interest~\cite{hamilton_embedding_2018,ren_beta_2020}.
This task ambitiously aims to answer database-level complex queries described by logical complex connectives (conjunction $\land$, disjunction $\lor$, and negation $\lnot$) and quantifiers\footnote{The universal quantifier is usually not considered in query answering tasks, as a common practice from both CQA on KG~\cite{wang_logical_2022,ren_neural_2023} and database query answering~\cite{poess_new_2000}} (existential $\exists$)~\cite{wang_logical_2022,ren_neural_2023,leskovec_databases_2023}.
However, CQA on KGs differs from query answering on databases in two aspects: (1) traditional query answering algorithms obtain incomplete answers because of the incomplete KG~\cite{hamilton_embedding_2018}; (2) the huge size of the knowledge graph limits the scalability of traditional algorithms~\cite{ren_query2box_2020}.
Therefore, learning-based methods dominate the CQA tasks because they can empirically generalize to unseen knowledge as well as prevent the resource-demanding symbolic search.

The thriving of learning-based methods also puts an urgent request on high-quality datasets and benchmarks. In the previous study, datasets are developed by progressively expanding the \textbf{syntactical expressiveness}, where conjunction~\cite{hamilton_embedding_2018}, union~\cite{ren_query2box_2020}, negation~\cite{ren_beta_2020}, and other operators~\cite{liu_neural-answering_2021} are taken into account sequentially. In particular, the dataset proposed in~\cite{ren_beta_2020} contains all logical connectives and becomes the standard training set for model development.~\cite{wang_benchmarking_2021} proposed a large evaluation benchmark EFO-1-QA that systematically evaluates the combinatorial generalizability of CQA models on such queries. More related works are included in Appendix~\ref{app: related works}.

However, the queries in aforementioned datasets~\cite{ren_beta_2020,wang_benchmarking_2021} are recently justified as ``Tree-Form'' queries~\cite{yin_existential_2023} as they rely on the tree combinations of set operations. Compared to the well-established TPC-H decision support benchmark~\cite{poess_new_2000} for database query processing, queries in existing CQA benchmarks~\cite{ren_beta_2020,wang_benchmarking_2021} have two common shortcomings: (1) lack of \textbf{combinatorial answers}: only one variable is queried, and (2) lack of \textbf{structural hardness}: all existing queries are subject to the structure-based tractability~\cite{rossi_handbook_2006,yin_existential_2023}. It is rather questionable whether existing CQA data under such limited scope can support the future development of methodologies for general decision support with open-world knowledge.

The goal of this paper is to establish a new framework that addresses the aforementioned shortcomings to support further research in complex query answering on knowledge graphs. Our framework is formally motivated by the well-established investigation of constraint satisfaction problems, which all queries can be formulated as. In general, the contribution of our work is four folds.
\begin{compactdesc}
    \item[Complete coverage] We capture the complete Existential First Order (EFO) queries from their rigorous definitions, underscoring both \textbf{combinatorial hardness} and \textbf{structural hardness} and extending the existing  coverage~\cite{wang_benchmarking_2021} which covers only a subset of $\textsc{EFO}_1$ query. The captured query family is denoted as $\efok$ where $k$ stands for multiple variables. 
    \item[Curated datasets] We derive $\efok$-CQA dataset, a systematic extension of the previous EFO-1-QA benchmark~\cite{wang_benchmarking_2021} and contains 741 types of query. We design several rules to guarantee that our dataset includes high-quality nontrivial queries, particularly those that contain multiple query variables and are not structure-based tractable.
    \item[Convenient implementation] We implement the entire pipeline for query generation, answer sampling, model training and inference, and evaluation for the undiscussed scenarios of \textbf{combinatorial answers}. Our pipeline is backward compatible, which supports both set operation-based methods and more recent ones.
    \item[Results and findings] We evaluate six representative CQA methods on our benchmark. Our results refresh the previous empirical findings and further reveal the structural bias of previous data.
\end{compactdesc}

\section{Problem definition}

\subsection{Existential first order (EFO) queries on knowledge graphs}\label{sec:knowledge graph definition}

Given a set $\entity$ of entities and a set $\relation$ of relations , a knowledge graph  $\mathcal{KG}$ encodes knowledge as set of factual triple $\mathcal{KG} = \{(h, r, t)\}\subset \mathcal{V} \times \mathcal{R} \times \mathcal{V}$. According to the OWA, the knowledge graph that we have observed $\mathcal{KG}_{o}$ is only part of the real knowledge graph, meaning that $\mathcal{KG}_{o} \subset \mathcal{KG}$.

The existing research only focuses on the logical formulas without universal quantifiers~\cite{ren_neural_2023,wang_logical_2023}. We then offer the definition of it based on strict first order logic.

\begin{definition}[Term]
    A term is either a variable $x$ or an entity $a\in \entity$.
\end{definition}

\begin{definition}[Atomic formula]\label{def:atomic-formula}
    $\phi$ is an atomic formula if $\phi= r(h, t)$, where $r\in \relation$ is a relation, $h$ and $t$ are two terms. 
\end{definition}

\begin{definition}[Existential first order formula]\label{def:formula}
The set of the existential formulas is the smallest set $\Phi$ that satisfies the following:
\begin{compactitem}
    \item[(i)] For atomic formula $r(h,t)$, itself and its negation $r(h,t), \lnot r(h,t)\in \Phi$
    \item[(ii)] If $\phi, \psi\in \Phi$, then $(\phi\land \psi),(\phi \lor \psi) \in \Phi$
    \item[(iii)] If $\phi \in \Phi$ and $x_i$ is any variable, then $\exists x_i \phi \in \Phi$.
\end{compactitem}
\end{definition}

\begin{definition}[Free variable]
    If a variable $y$ is not associated with a quantifier, it is called a free variable, otherwise, it is called a bounded variable. We write $\phi(y_1,\cdots, y_k)$ to indicate $y_1,\cdots,y_k$ are the free variables of $\phi$.
\end{definition}

\begin{definition}[Sentence and query]
    A formula $\phi$ is a sentence if it contains no free variable, otherwise, it is called a query. In this paper, we always consider formula with free variable, thus, we use formula and query interchangeably.
\end{definition}

\begin{definition}[Substitution]
    For $a_1,\cdots,a_k$, where $a_i\in \entity$, we write $\phi(a_1/y_1,\cdots, a_k/y_k)$ or simply $\phi(a_1,\cdots, a_k)$ for the result of simultaneously replacing all free occurrence of $y_i$ in $\phi$ by $a_i$, $i=1,\cdots,k$.
\end{definition}

\begin{definition}[Answer of an EFO query]
    For a given existential query $\phi(y_1,\cdots,y_k)$, its answer is a set that defined by
    \begin{align*}
        \mathcal{A}[\phi(y_1,\cdots,y_k)] = \{(a_1,\cdots,a_k) |~a_i\in \entity, i=1,\cdots,k \text{, }\phi(a_1,\cdots, a_k) \text{ is True}\}
    \end{align*}
\end{definition}

\begin{definition}[Disjunctive Normal Form (DNF)] For any existential formula $\phi(y_1,\cdots,y_k)$, it can be converted to the disjunctive normal form as shown below:
\begin{align}
    \phi(y_1,\cdots,y_k) &= \gamma_1(y_1,\cdots,y_k) \lor  \cdots \lor \gamma_m(y_1,\cdots,y_k)
    \\
    \gamma_i(y_1,\cdots,y_k) &= \exists x_1,\cdots,x_n. \rho_{i1} \land \cdots \land \rho_{it} \label{def:DNF formula}
\end{align}
 where $\gamma_i$ is called conjunctive formulas, $\rho_{ij}$ is either an atomic formula or the negation of an atomic formula, $x_i$ is called an existential variable.
\end{definition}

DNF has a strong property that $\mathcal{A}[\phi(y_1,\cdots,y_k)] = \cup_{i=1}^{m} \mathcal{A}[\gamma_i(y_1,\cdots,y_k)]$, which allows us to only consider conjunctive formulas $\gamma_i$ and then aggregate those answers to retrieve the final answers. This practical technique has been used in many previous research~\cite{long_neural-based_2022,ren_neural_2023}. Therefore, we only discuss conjunctive formulas in the rest of this paper.

\begin{figure*}[t]
\centering
\includegraphics[width=.95\linewidth]{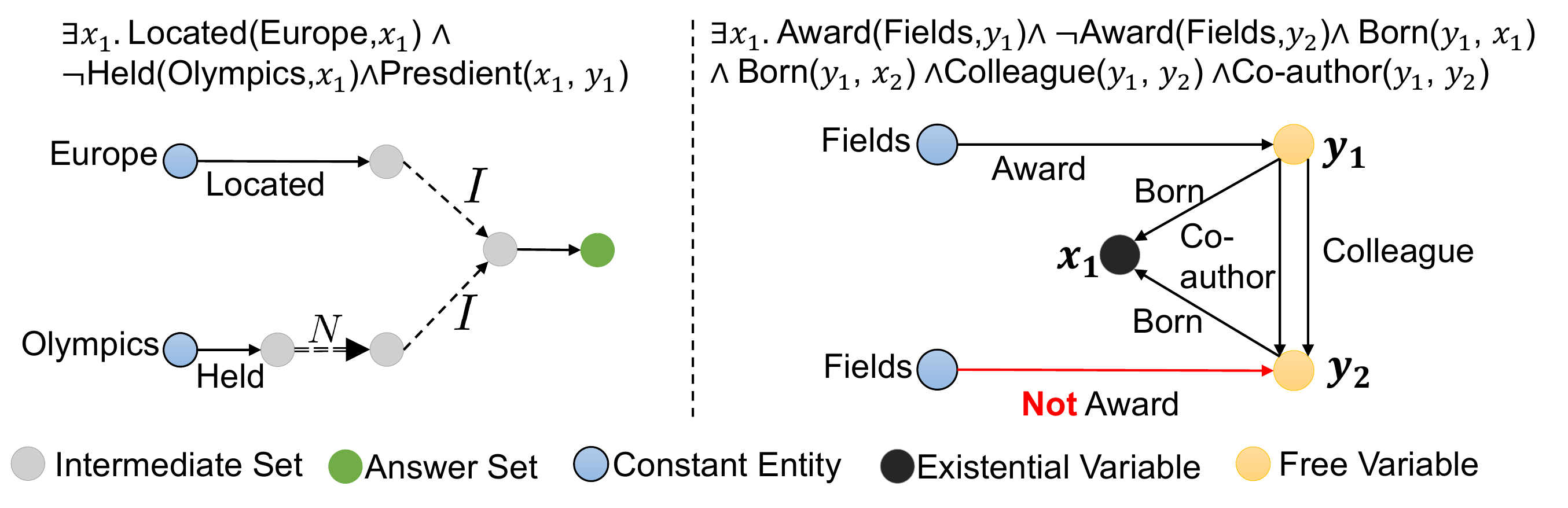}
\caption{Operator Tree versus Query Graph. \textbf{Left}: An operator tree representing  a given query ``List the presidents of European countries that have never held the Olympics'' \cite{ren_beta_2020}; \textbf{Right}: A query graph representing a given query ``Find a pair of persons who are both colleagues and co-authors and were born in the same country, with one having awarded the fields medal while the another not'', which is both a multigraph and a cyclic graph, containing two free variables.}
\label{fig:represent_query}
\vspace{-1.5em}
\end{figure*}

\subsection{Constraint satisfaction problem for conjunctive queries}\label{sec: conjunctive formula as CSP}
Formally, a constraint satisfaction problem (CSP) $\mathcal{P}$ can be represented by a triple $\mathcal{P}=(X,D,C)$ where $X=(x_1,\cdots, x_n)$ is an $n$-tuple of variables, $D=(D_1,\cdots, D_n)$ is the corresponding $n$-tuple of domains, $C=(C_1,\cdots,C_t)$ is $t$-tuple of constraints, each constraint $C_i$ is a pair of $(S_i,R_{S_i})$ where $S_i$ is a set of variables $S_i=\{x_{i_j}\}$ and $R_{{S_i}}$ is the constraint over those variables~\cite{rossi_handbook_2006}.

Historically, there are strong parallels between CSP and conjunctive queries in knowledge bases~\cite{gottlob_hypertree_1999,kolaitis_conjunctive-query_1998}. The terms correspond to the variable set $X$. The domain $D_i$ of a constant entity contains only itself, while it is the whole entity set $\entity$ for other variables. Each constraint $C_i$ is binary that is induced by an atomic formula or its negation, for example, for an atomic formula $r(h,t)$,  we have $S_i = \{h,t\}$, $R_{S_i}=\{(h,t) | h,t\in \entity, (h,r,t)\in \mathcal{KG}\}$. Finally, by the definition of existential quantifier, we only consider the answer of free variable, rather than tracking all terms within the existential formulas.

\begin{definition}[CSP answer of conjunctive formula]\label{def: CSP answer of existential formula} For a conjunctive formula $\gamma$ in Equation~\ref{def:DNF formula} with $k$ free variables and $n$ existential variables, the answer set of it formulated as CSP instance is:
\begin{align*}
    \mathcal{\overline{A}}[\gamma(y_1,\cdots,y_k)] = \mathcal{A}[\gamma^{\star}(y_1,\cdots,y_{n+k})], \text{ where } \gamma^{\star} = \rho_{i1} \land \cdots \land \rho_{it}
\end{align*}
\end{definition}

This shows that the inference of existential formulas is easier than solving CSP instances since the existential variables do not need to be kept track of.

\subsection{The representation of query}\label{sec: representation of formula}

To give an explicit representation of EFO query,~\cite{hamilton_embedding_2018} firstly proposes to represent a query by operator tree, where each node represents the answer set for a sub-query, and the logic operators in it naturally represent set operations. This method allows for the recursive computation from constant entity to the final answer set in a bottom-up manner~\cite{ren_beta_2020}. However, this representation method is inherently directed, acyclic, and simple, therefore more recent research breaks these constraints by being bidirectional~\cite{liu_mask_2022, wang_logical_2022} or being cyclic or multi~\cite{yin_existential_2023}. To meet these new requirements, they propose to represent the formula by the query graph~\cite{yin_existential_2023}, which inherits the convention of constraint network in representing CSP instance~\cite{rossi_handbook_2006}. We utilize this design and further extend it to represent $\efok$ query that contains multiple free variables. We provide the illustration and comparison of the operator tree method and the query graph method in Figure~\ref{fig:represent_query}, where we show the strong expressiveness of the query graph method. We also provide the formal definition of query graph as follows:

\begin{definition}[Query graph]\label{def: query graph}
    Let $\gamma$ be a conjunctive formula in Equation~\ref{def:DNF formula}, its query graph is defined by $G(\gamma) = \{(h,r,t,\{\text{T, F}\})\}$, where an atomic formula $\rho=r(h,t)$ in $\gamma$ corresponds to $(h,r,t,\text{T})$ and $\rho=\lnot r(h,t)$ corresponds to $(h,r,t,\text{F})$. The \text{T, F} is short for \textbf{True, False} respectively.
\end{definition}

Therefore, any conjunctive formulas can be represented by a query graph, in the rest of the paper, we use query graphs and conjunctive formulas interchangeably.

\section{The combinatorial space of $\efok$ queries}\label{sec:nontrivial query}

\begin{figure*}[t]
\centering
\includegraphics[width=.55\linewidth]{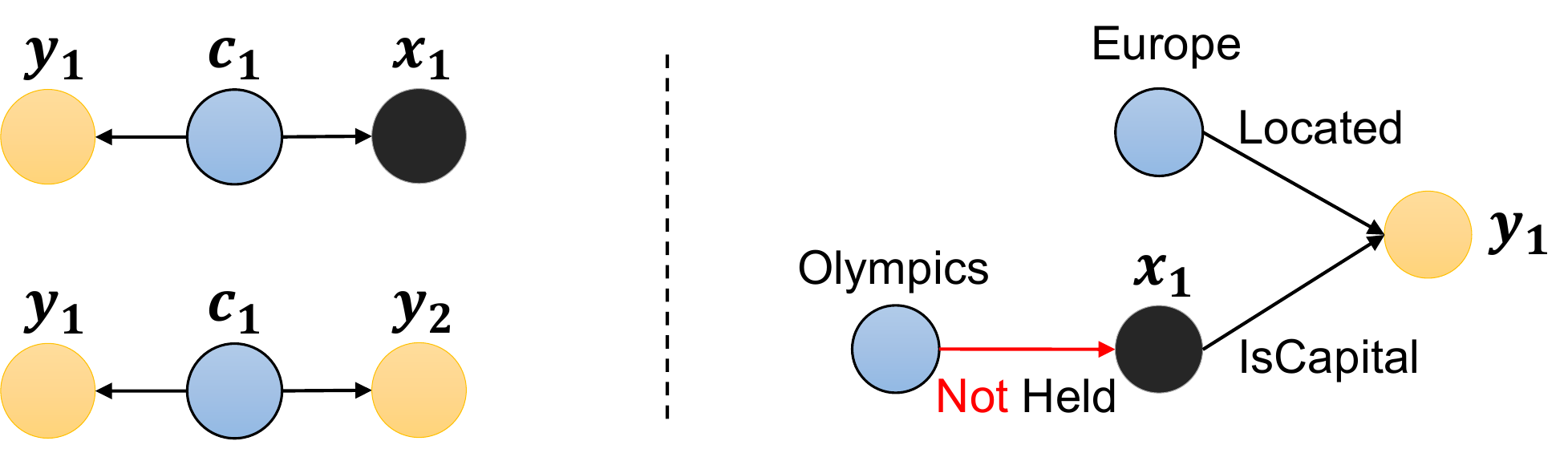}
\caption{\textbf{Left}: Example of trivial abstract query graph, in the upper left graph, the $x_1$ is redundant, violating Assumption~\ref{asm:no redundancy}, in the bottom left graph, answers for the whole query can be decomposed to answer two free variables $y_1$ and $y_2$ alone, violating Assumption~\ref{asm:no decomposition}. \textbf{Right}: Example of new query graph that is not included in previous benchmark~\cite{wang_benchmarking_2021} even though it can be represented by operator-tree. The representation of query graph follows Figure~\ref{fig:represent_query}.}
\label{fig: nontrivial query}
\vspace{-1em}
\end{figure*}

 Although previous research has given a systematic investigation in the combinatorial space of operator trees~\cite{wang_benchmarking_2021}, the combinatorial space of the query graph is much more challenging due to the extremely large search space and the lack of explicit recursive formulation. To tackle this issue on a solid theoretical background, we put forward additional assumptions to exclude trivial query graphs. Such assumptions or restrictions also exist in the previous dataset and benchmark~\cite{ren_beta_2020,wang_benchmarking_2021}. Specifically, we propose to split the task of generating data into two levels, the abstract level, where we create \emph{abstract query graph}, also known as ``query type'' in previous research~\cite{ren_query2box_2020}, and the grounded level, where we provide the abstract query graph with the relation and constant to ground it as a query graph. In this section, we elaborate on how we investigate the scope of the nontrivial $\efok$ query of \textbf{structure hardness} step by step.

\subsection{Nontrivial abstract query graph of $\efok$}

The abstract query graph is the ungrounded query graph, without information of certain knowledge graphs, and we give an example in Figure~\ref{fig: query framework}.

\begin{definition}[Abstract query graph]
    The abstract query graph $\mathcal{G}=(V,E,f,g)$ is a directed graph with three node types,$\{\textbf{Constant Entity, Existential Variable, Free variable}\}$, and two edge types,$\{\text{T, F}\}$. The $V$ is the set of nodes, $E$ is the set of directed edges, $f$ is the function maps node to node type, and $g$ is the function maps edge to edge type. We note the \text{T, F} is the same with Definition~\ref{def: query graph}.
\end{definition}

\begin{definition}[Grounding]
    For an abstract query graph $\mathcal{G}$, a grounding is a function $I$ that maps it into a query graph $I(\mathcal{G})$.
\end{definition}

We propose two assumptions of the abstract query graph as follows:

\begin{assumption}[No redundancy]\label{asm:no redundancy}
    For a abstract query graph $\mathcal{G}$, there is not a subgraph  $\mathcal{G}_s\subsetneq \mathcal{G}$ such that for every grounding $I$, $\mathcal{A}[I(\mathcal{G})]=\mathcal{A}[I(\mathcal{G}_s)]$.
\end{assumption}

\begin{assumption}[No decomposition]\label{asm:no decomposition}
    For an abstract query graph $\mathcal{G}$, there are no such two subgraphs $\mathcal{G}_{1}$, $\mathcal{G}_{2}$, satisfying that $\mathcal{G}_{1}, \mathcal{G}_{2}\subsetneq \mathcal{G}$, such that for every instantiation $I$, $\mathcal{A}[I(\mathcal{G})] = \mathcal{A}[I(\mathcal{G}_{1})] \bigtimes \mathcal{A}[I(\mathcal{G}_{2})]$, where the $\bigtimes$ represents the Cartesian product.
\end{assumption}

We note that the assumption~\ref{asm:no decomposition} inherits the idea of the \textbf{structural} decomposition technique in CSP~\cite{gottlob_comparison_2000}, which allows for solving a CSP instance by solving several sub-problems and combining the answer together based on topology property. Additionally, meeting these two assumptions in the grounded query graph is extremely computationally costly which we aim to avoid in practice.

We provide some easy examples to be excluded for violating the assumptions above in Figure~\ref{fig: nontrivial query}.

\subsection{Nontrivial query graph of $\efok$}
Similarly, we propose two assumptions on the query graph after grounding the abstract query graph $G=I(\mathcal{G})$:

\begin{assumption}[Meaningful negation]\label{asm:meaningful negation}
    For any negative edge $e$ in query graph $G$, we require removing it results in different CSP answers: $\mathcal{\overline{A}}[G-e] \neq \mathcal{\overline{A}}[G]$.\footnote{Ideally, we should expect them to have different answers as the existential formulas, however, this is computation costly and difficult to sample in practice, which is further discussed in Appendix~\ref{app:sample negative query}.}
\end{assumption}

 Assumption~\ref{asm:meaningful negation} treats negation separately because of the fact that for any $\mathcal{KG}$, any relation $r\in \relation$, there is $|\{(h,t)|h,t\in \entity, (h,r,t)\in \mathcal{KG} \}| \ll \entity^2$, which means that the constraint induced by the negation of an atomic formula is much less ``strict'' than the one induced by a positive atomic formula.

\begin{assumption}[Appropriate answer size]\label{asm:appropriate answer size}
    There is a constant $M \ll \entity$ to bound the candidate set for each free variable $f_i$ in $G$, such that for any $i$,  $|\{a_i\in \entity| ~(a_1,\cdots,a_i,\cdots,a_k)\in \mathcal{A}[G]\}| \leq M$.
\end{assumption}

We note the Assumption~\ref{asm:appropriate answer size} \textbf{extends} the ``bounded negation'' assumption in the previous dataset~\cite{ren_beta_2020,wang_benchmarking_2021}.  We give an example ``Find a city that is located in Europe and is the capital of a country that has not held the Olympics'' in Figure~\ref{fig: nontrivial query}, where the choice of variable $x_1$ is in fact bounded by its relation with the $y_1$ variable but not from the bottom ``Olympics'' constant, hence, this query is excluded in their dataset due to the directionality of operator tree.

Overall, the scope of the formula investigated in this paper surpasses the previous EFO-1-QA  benchmark because of: (1). We include the $\efok$ formula with multiple free variables for the first time; (2). We include the whole family of $\textsc{EFO}_1$ query, many of them can not be represented by operator tree; (3) Our assumption is more systematic than previous ones as shown by the example in Figure~\ref{fig: nontrivial query}. More details are offered in Appendix~\ref{app:extend benchmark}.

\section{Framework}\label{sec: Framework}

\begin{figure*}[t]
\centering
\includegraphics[width=.9\linewidth]{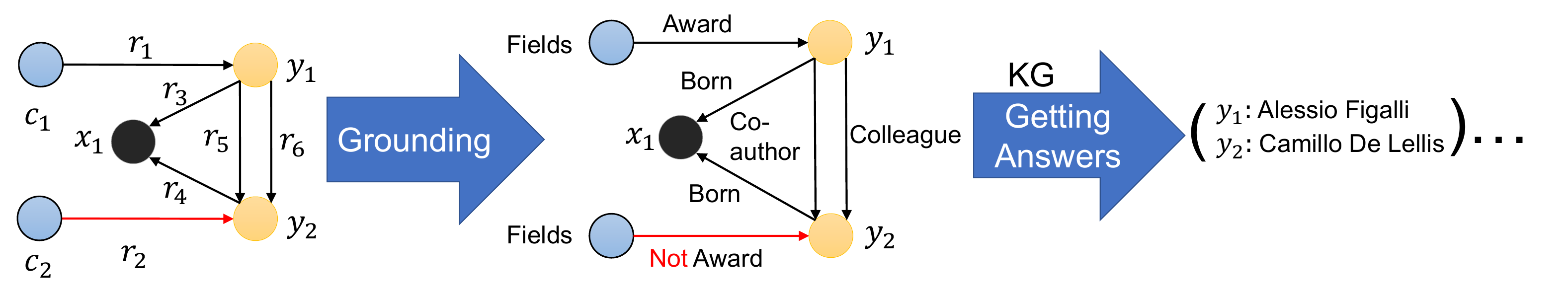}
\caption{Illustration of the functionality of our framework.\textbf{Left}: abstract query graph, \textbf{Middle}: query graph, \textbf{Right}: answer of query. The representation of query graph follows Figure~\ref{fig:represent_query}.}
\label{fig: query framework}
\vspace{-1em}
\end{figure*}
We develop a versatile framework that supports five key functionalities fundamental to the whole CQA task: (1) Enumeration of nontrivial abstract query graphs as discussed in Section~\ref{sec:nontrivial query}; (2) Sample grounding for the abstract query graph; (3) Compute answer for any query graph efficiently; (4) Support implementation of existing CQA models; (5) Conduct evaluation including newly introduced $\efok$ queries with multiple free variables. We explain each functionality in the following. An illustration of the first three functionalities is given in Figure~\ref{fig: query framework}.

\subsection{Enumerate abstract query graph }

As discussed in Section~\ref{sec:nontrivial query}, we are able to abide by those assumptions as well as \textbf{enumerate} all possible query graphs within a given search space where certain parameters, including the number of constants, free variables, existential variables, and the number of edges are all given. Additionally, we apply the graph isomorphism algorithm to avoid duplicated query graphs being generated. More details for our generation method are provided in Appendix~\ref{app:proof create qg}.

\subsection{Ground abstract query graph}\label{sec: ground abstract query graph}

To ground an abstract query graph $\mathcal{G}$ and comply with the assumption~\ref{asm:meaningful negation}, we split the abstract query graph into two parts, the positive part and the negative part, $\mathcal{G}=\mathcal{G}_p \cup \mathcal{G}_n$.  Then the grounding process is also split into two steps: 1. Sample grounding for the positive subgraph $\mathcal{G}_p$ and compute its answer 2. Ground the $\mathcal{G}_n$ to decrease the answer got in the first step. Details in Appendix~\ref{app:sample negative query}.

Finally, to fulfill the assumption~\ref{asm:appropriate answer size}, we follow the previous practice of manually filtering out queries that have more than 100 answers~\cite{ren_beta_2020,wang_benchmarking_2021}, as we have introduced the $\efok$ queries, we slightly soften this constraint to be no more than $100 \times k$ answers. 

\subsection{Answer for conjunctive query}

 As illustrated in Section~\ref{sec: conjunctive formula as CSP}, the answer to a conjunctive query can be solved by a CSP solver, however, we also show in Definition~\ref{def: CSP answer of existential formula} that CSP requires keeping track of the existential variables and it leads to huge computation costs. Thus, we develop our own algorithm following the standard solving technique of CSP, which ensures consistency conditions in the first step, and do the backtracking to get the final answers in the second step. Finally, we select part of our sampled queries and double-check its answer with the CSP solver~\url{https://github.com/python-constraint/python-constraint}.

\subsection{Learning-based methods}

As the query graph is an extension to the operator tree regarding the express ability to existential formulas, we are able to reproduce CQA models that are initially implemented by the operator tree in our new framework. Specifically, since the operator tree is directed and acyclic, we compute the topology ordering that allows for step-by-step computation in the query graph. This algorithm is illustrated in detail in the Appendix~\ref{app:implementation detail of CQA model}. We note our implementation coincides with the original one.

Conversely, for the newly proposed models that are based on query graphs, the original operator tree framework is not able to implement them, while our framework is powerful enough. We have therefore clearly shown that the query graph representation is more powerful than the previous operator tree and is able to support arbitrary conjunctive formulas as explained in Section~\ref{sec: representation of formula}.

\subsection{Evaluation protocol}\label{sec: evaluation for multiple}

As we have mentioned in Section~\ref{sec:knowledge graph definition}, there is an observed knowledge graph $\mathcal{KG}_{o}$ and a full knowledge graph $\mathcal{KG}$. Thus, there is a set of observed answers $\mathcal{A}_o$ and a set of full answers $\mathcal{A}$ correspondingly. Since the goal of CQA is to tackle the challenge of OWA, it has been a common practice to evaluate CQA models by the ``hard'' answers $\mathcal{A}_h=\mathcal{A}-\mathcal{A}_o$~\cite{ren_query2box_2020,ren_neural_2023}. However, to the best of our knowledge, there has not been a systematic evaluation protocol for $\efok$ queries, thus we leverage the idea of the previous study and propose three types of different metrics to fill the research gap in the area of evaluation of queries with multiple free variables, and thus have \textbf{combinatorial hardness}.

\noindent\textbf{Marginal.} For any free variable $f_i$, its full answer is $\mathcal{A}^{f_i}=\{a_i\in \entity| (a_1,\cdots,a_i,\cdots,a_k)\in \mathcal{A}\}$, the observed answer of it $\mathcal{A}_{o}^{f_i}$ is defined similarly. This is termed ``solution projection'' in CSP theory~\cite{greco_power_2013} to evaluate whether the locally retrieved answer can be extended to an answer for the whole problem. Then, we rank its hard answer $\mathcal{A}_{h}^{f_i} = \mathcal{A}^{f_i} - \mathcal{A}_{o}^{f_i}$\footnote{We note $\mathcal{A}_{h}^{f_i}$ can be empty for some free variables or even for all free variables, making these marginal metrics not reliable, details in Appendix~\ref{app:evaluation detail}.}, against those non-answers $\entity-\mathcal{A}^{f_i} - \mathcal{A}_{o}^{f_i}$ and use the ranking to compute standard metrics like MRR, HIT@K for every free variable. Finally, the metric on the whole query graph is taken as the average of the metric on all free variables. We note that this metric is an extension of the previous design proposed by~\cite{liu_neural-answering_2021}. However, this metric has the inherent drawback that it fails to evaluate the combinatorial answer by the $k$-length tuple and thus fails to find the correspondence among free variables.

\noindent\textbf{Multiply.} Because of the limitation of the marginal metric discussed above, we propose to evaluate the combinatorial answer by each $k$-length tuple $(a_1,\cdots,a_k)$ in the hard answer set $\mathcal{A}_{h}$. Specifically, we rank each $a_i$ in the corresponding node $f_i$ the same as the marginal metric. Then, we propose the $\text{HIT}@n^{k}$ metric, it is 1 if all $a_i$ is ranked in the top $n$ in the corresponding node $f_i$, and 0 otherwise.

\noindent\textbf{Joint.} Finally, we note these metrics above are not the standard way of evaluation, which should be based on a joint ranking for all the $\entity^k$  combinations of the entire search space. We propose to estimate the joint ranking in a closed form given certain assumptions, see Appendix~\ref{app:evaluation detail} for the proof and details.

\section{The $\efok$-CQA dataset and benchmark results}

\subsection{The $\efok$-CQA dataset}

With the help of our framework developed in Section~\ref{sec: Framework}, we are able to develop a new dataset called $\efok$-CQA, whose combinatorial space is parameterized by the number of constants, existential and free variables, and the number of edges. $\efok$-CQA dataset includes 741 different abstract query graphs in total. The parameters and the generation process, as well as it statistics is detailed in Appendix~\ref{app:EFOX statistics}.

Then, we conduct experiments on our new $\efok$-CQA dataset with six representative CQA models including BetaE~\cite{ren_beta_2020}, LogicE ~\cite{luus_logic_2021}, and ConE~\cite{zhang_cone_2021}, which are built on the operator tree, CQD~\cite{arakelyan_complex_2021}, LMPNN~\cite{wang_logical_2023}, and FIT~\cite{yin_existential_2023} which are built on query graph. The experiments are conducted in two parts, (1). The queries with one free variable, specifically, including those that can not be represented by the operator tree; (2). The queries contain multiple free variables.

We have made some adaptations to the implementation of CQA models, allowing them to infer $\efok$ queries, full detail is offered in Appendix~\ref{app:implementation detail of CQA model}. The experiment is conducted on a standard knowledge graph FB15k-237~\cite{toutanova_observed_2015} and additional experiments on other standard knowledge graphs FB15k and NELL are presented in Appendix~\ref{app: additional experiment result}.

\subsection{Benchmark results for $k=1$}\label{sec: EFO1 result}
\begin{table}[t]
\caption{MRR scores(\%) for inferring queries with one free variable on FB15k-237. We denote $e$ as the number of existential variables and $c$ as the number of constant entities. SDAG represents the Simple Directed Acyclic Graph, Multi for multigraph, and Cyclic for the cyclic graph. AVG.($c$) and AVG.($e$) is the average score of queries with the number of constant entities / existential variables fixed.}
\label{tab: EFO1 result}
\centering
\footnotesize
\begin{tabular}{cccccccccc}
\toprule
      Model            & \diagbox{$c$}{$e$} & 0   & \multicolumn{2}{c}{1} & \multicolumn{3}{c}
      {2} & \multirow{2}{*}{AVG.($c$)} & \multirow{2}{*}{AVG.} \\
      \cmidrule(lr){3-3} \cmidrule(lr){4-5} \cmidrule(lr){6-8}
                  &   & SDAG & SDAG      & Multi      & SDAG  & Multi & Cyclic &        &     \\
\midrule                  
\multirow{4}{*}{BetaE} & 1 &16.2&17.9&10.9&10.6&8.5&16.5&11.1&\multirow{3}{*}{20.7}   \\
                  & 2 &35.6&20.2&19.1&15.7&15.7&27.1&17.8    \\
                  & 3 &53.3&32.4&33.1&21.7&21.6&37.4&24.8  \\\cmidrule(lr){2-8}
                  & AVG.($e$)&37.4&25.7&23.5&18.8&18.1&30.5&\\
\midrule                  
\multirow{4}{*}{LogicE} & 1 &17.4&19.0&11.5&11.0&8.5&16.8&11.5&\multirow{3}{*}{21.3}   \\
                  & 2 &36.7&21.2&19.8&16.5&16.1&27.3&18.4    \\
                  & 3 &55.5&34.6&34.5&22.3&22.0&37.5&25.4  \\  \cmidrule(lr){2-8}
                  & AVG.($e$)&38.9&27.3&24.5&19.4&18.5&30.6&\\
\midrule                  
\multirow{4}{*}{ConE} & 1 &18.6&19.9&11.8&11.4&9.3&18.7&12.3&\multirow{4}{*}{23.1}   \\
                  & 2 &39.1&22.4&20.8&18.1&17.6&30.7&20.1    \\
                  & 3 &58.8&36.4&37.0&24.6&23.8&41.7&27.6  \\ \cmidrule(lr){2-8}
                  & AVG.($e$)&41.4&28.7&26.0&21.3&20.1&34.2&\\
\midrule        
\multirow{4}{*}{CQD} 
& 1 &\textbf{22.2}&19.5&9.0&9.2&6.4&15.6&10.0&\multirow{4}{*}{21.9}   \\
& 2 &35.3&20.1&19.1&16.4&16.2&27.6&18.4    \\
& 3 &40.3&32.9&34.3&24.4&24.0&40.2&26.8  \\ \cmidrule(lr){2-8}
& AVG.($e$)&33.9&26.2&23.7&20.5&19.4&31.9& \\
\midrule
\multirow{4}{*}{LMPNN} & 1 &20.5&21.4&11.2&11.6&8.7&17.0&11.9&\multirow{4}{*}{20.5}   \\
                  & 2 &42.0&22.6&18.5&16.5&14.9&26.5&17.9    \\
                  & 3 &62.3&35.9&31.6&22.1&19.8&35.5&24.0  \\ \cmidrule(lr){2-8}
                  & AVG.($e$)&44.2&28.8&22.7&19.4&16.9&29.4& \\
\midrule
\multirow{4}{*}{FIT} & 1 &\textbf{22.2}&\textbf{25.0}&\textbf{17.4}&\textbf{13.9}&\textbf{11.7}&\textbf{23.3}&\textbf{15.6}&\multirow{4}{*}{\textbf{30.3}}   \\
                  & 2 &\textbf{45.3}&\textbf{29.6}&\textbf{28.5}&\textbf{23.8}&\textbf{24.3}&\textbf{35.5}&\textbf{26.5}    \\
                  & 3 &\textbf{64.5}&\textbf{44.8}&\textbf{45.4}&\textbf{33.3}&\textbf{33.5}&\textbf{44.4}&\textbf{36.2}  \\ \cmidrule(lr){2-8}
                  & AVG.($e$)&\textbf{46.7}&\textbf{36.2}&\textbf{33.6}&\textbf{28.6}&\textbf{27.9}&\textbf{37.9}& \\
\bottomrule
\end{tabular}
\vspace{-1em}
\end{table}
Because of the great number of abstract query graphs, we follow~\cite{wang_benchmarking_2021} to group abstract query graphs(query type) by three factors: (1). The number of constant entities; (2). The number of existential variables, and (3). The topology of the abstract query graph\footnote{We make a further constraint in our $\efok$-CQA dataset that the total edge is at most as many as the number of nodes, thus, a graph can not be both a multigraph and a cyclic graph.}. The result is shown in Table~\ref{tab: EFO1 result}.

\noindent\textbf{Structure analysis.} Firstly, we find a clear monotonic trend that adding constant entities makes a query easier while adding existing variables makes a query harder, which is intuitively correct while the previous research~\cite{wang_benchmarking_2021} fails to uncover because of the hindrance of logical operators in the operator tree. Besides, we are the first to consider the topology of query graphs: when the number of constants and existential variables is fixed, we have found the originally investigated queries that correspond to Simple Directed Acyclic Graphs (SDAG) are generally easier than the multigraphs ones but harder than the cyclic graph ones. This is an intriguing result that greatly deviates from traditional CSP theory which finds that the cyclic graph is NP-complete, while the acyclic one is tractable~\cite{carbonnel_tractability_2016}. Our conjuncture is that the cyclic graph contains one more constraint than SDAG that serves as a source of information for CQA models, while the multigraph tightens an existing constraint and thus makes the query harder.

\noindent\textbf{Model analysis.} For models that are built on operator tree, including BetaE, LogicE, and ConE, their relative performance is steady among all breakdowns and is consistent with their reported score in the original dataset~\cite{ren_beta_2020}, showing similar generalizability. However, for models that are built on query graphs, including CQD, LMPNN, and FIT,  we have found that LMPNN performs generally better than CQD in SDAG, but falls behind CQD in multigraphs and cyclic graphs. We assume the reason behind this is that LMPNN requires training while CQD does not, however, the original dataset are \textbf{biased} which only considers SDAG, leading to the result that LMPNN doesn't generalize well to the unseen tasks with different topology property. We expect future CQA models may use our framework to address this issue of biased data and generalize better to more complex queries.

We note FIT is designed to infer all $\textsc{EFO}_1$ queries and is indeed able to outperform other models in all breakdowns, however, its performance comes with the price of computational cost, and face challenges in cyclic graph where it degenerates to enumeration: which we further explain in Appendix~\ref{app:implementation detail of CQA model}. 

\subsection{Benchmark results for $k=2$}\label{sec: EFO2 result}

\begin{table}[t]
\centering
\caption{HIT@10 scores(\%) of three different types for answering queries with two variables on FB15k-237. The constant number is fixed to be two. $e$ is the number of existential variables. The SDAG, Multi, and Cyclic are the same as Table~\ref{tab: EFO1 result}.}
\label{tab: EFO2 result}
\footnotesize
\begin{tabular}{ccrrrrrrrrr}
\toprule
\multirow{2}{*}{Model}  & \multirow{2}{*}{\shortstack[c]{HIT@10\\ Type}}  & \multicolumn{2}{c}{$e=0$} & \multicolumn{3}{c}{$e=1$} & \multicolumn{3}{c}{$e=2$} & \multirow{2}{*}{AVG.} \\
\cmidrule(lr){3-4} \cmidrule(lr){5-7} \cmidrule(lr){8-10} & & SDAG      & Multi &  SDAG  & Multi & Cyclic &  SDAG  & Multi & Cyclic & \\
\midrule
\multirow{3}{*}{BetaE} 
& Marginal& 54.5&50.2&49.5&46.0&58.8&37.2&35.5&58.3&43.8 \\
& Multiply&27.3&22.4&22.3&16.9&26.2&16.9&13.9&25.7&18.3\\
& Joint&6.3&5.4&5.2&4.2&10.8&2.2&2.3&9.5&4.5\\
\midrule
\multirow{3}{*}{LogicE} 
& Marginal&58.2&50.9&52.2&47.4&60.4&37.7&35.8&59.2&44.6 \\
& Multiply&32.1&23.1&24.9&18.1&28.3&18.1&14.8&26.6&19.5\\
& Joint&6.8&6.0&6.1&4.5&12.3&2.5&2.7&10.3&5.1 \\
\midrule
\multirow{3}{*}{ConE} 
& Marginal&60.3&53.8&54.2&50.3&\textbf{66.2}&40.1&38.5&\textbf{63.7}&47.7 \\
& Multiply&33.7&25.2&26.1&19.8&32.1&19.5&16.3&30.3&21.5\\
& Joint&6.7&6.4&6.2&4.8&12.6&2.6&2.7&10.9&5.3 \\
\midrule
\multirow{3}{*}{CQD} 
& Marginal&50.4&46.5&49.1&45.6&59.7&33.5&33.1&61.5&42.8 \\
& Multiply  &28.9&23.4&25.4&19.5&31.3&17.8&16.0&30.5&21.0\\
& Joint   &\textbf{8.0}&8.0&7.4&6.0&\textbf{13.9}&3.6&3.9&\textbf{12.0}&\textbf{6.4}  \\
\midrule
\multirow{3}{*}{LMPNN} 
& Marginal&58.4&51.1&54.9&49.2&64.7&39.6&36.1&58.7&45.4 \\
& Multiply   &35.0&26.7&29.2&21.7&\textbf{33.4}&21.4&17.0&28.4&22.2\\
& Joint   &7.6&7.5&7.1&5.3&12.9&2.8&2.9&9.5&5.2  \\
\midrule
\multirow{3}{*}{FIT} 
& Marginal&\textbf{64.3}&\textbf{61.0}&\textbf{63.1}&\textbf{60.7}&58.5&\textbf{49.0}&\textbf{49.1}&60.2&\textbf{54.3} \\
& Multiply  &\textbf{39.7}&\textbf{32.2}&\textbf{35.9}&\textbf{27.8}&27.4&\textbf{29.5}&\textbf{26.8}&\textbf{32.4}&\textbf{29.2}\\
& Joint    &7.4&\textbf{9.0}&\textbf{7.8}&\textbf{6.5}&10.1&\textbf{3.7}&\textbf{4.6}&10.6&\textbf{6.4}   \\
\bottomrule
\end{tabular}
\vspace{-1em}
\end{table}
As we have explained in Section~\ref{sec: evaluation for multiple}, we propose three kinds of metrics, marginal ones, multiply ones, and joint ones, from easy to hard, to evaluate the performance of a model in the scenario of multiple variables. The evaluation result is shown in Table~\ref{tab: EFO2 result}. As the effect of the number of constant variables is quite clear, we remove it and add the metrics based on $\text{HIT}@10$ as the new factor. 

For the impact regarding the number of existential variables and the topology property of the query graph, we find the result is similar to Table~\ref{tab: EFO1 result}, which may be explained by the fact that those models are all initially designed to infer queries with one free variable.
For the three metrics we have proposed, we have identified a clear difficulty difference among them though they generally show similar trends. The scores of joint HIT@10 are pretty low, indicating the great hardness of answering queries with multiple variables. Moreover, we have found that FIT falls behind other models in some breakdowns which are mostly cyclic graphs, corroborating our discussion in Section~\ref{sec: EFO1 result}. 

\section{Conclusion}
In this paper, we make a thorough investigation of the family of $\efok$ formulas based on solid theoretical background. We then present a new powerful framework that supports several functionalities essential to CQA task, with this help, we build the $\efok$-CQA dataset that greatly extends the previous dataset and benchmark. Our evaluation result brings new empirical findings and reflects the biased selection in the previous dataset impairs the performance of CQA models, emphasizing the contribution of our work.

\bibliographystyle{plainnat}
\bibliography{ref}

\begin{thebibliography}{32}
\providecommand{\natexlab}[1]{#1}
\providecommand{\url}[1]{\texttt{#1}}
\expandafter\ifx\csname urlstyle\endcsname\relax
  \providecommand{\doi}[1]{doi: #1}\else
  \providecommand{\doi}{doi: \begingroup \urlstyle{rm}\Url}\fi

\bibitem[Arakelyan et~al.(2021)Arakelyan, Daza, Minervini, and
  Cochez]{arakelyan_complex_2021}
Erik Arakelyan, Daniel Daza, Pasquale Minervini, and Michael Cochez.
\newblock Complex query answering with neural link predictors.
\newblock \emph{arXiv preprint arXiv:2011.03459}, 2021.

\bibitem[Bai et~al.(2022)Bai, Wang, Zhang, and Song]{bai_query2particles_2022}
Jiaxin Bai, Zihao Wang, Hongming Zhang, and Yangqiu Song.
\newblock {Query2Particles}: {Knowledge} {Graph} {Reasoning} with {Particle}
  {Embeddings}.
\newblock \emph{arXiv preprint arXiv:2204.12847}, 2022.

\bibitem[Bordes et~al.(2013)Bordes, Usunier, Garcia-Duran, Weston, and
  Yakhnenko]{bordes_translating_2013}
Antoine Bordes, Nicolas Usunier, Alberto Garcia-Duran, Jason Weston, and Oksana
  Yakhnenko.
\newblock Translating {Embeddings} for {Modeling} {Multi}-relational {Data}.
\newblock In \emph{Advances in {Neural} {Information} {Processing} {Systems}},
  volume~26. Curran Associates, Inc., 2013.
\newblock URL
  \url{https://papers.nips.cc/paper_files/paper/2013/hash/1cecc7a77928ca8133fa24680a88d2f9-Abstract.html}.

\bibitem[Carbonnel and Cooper(2016)]{carbonnel_tractability_2016}
Cl{\'e}ment Carbonnel and Martin~C Cooper.
\newblock Tractability in constraint satisfaction problems: a survey.
\newblock \emph{Constraints}, 21\penalty0 (2):\penalty0 115--144, 2016.
\newblock Publisher: Springer.

\bibitem[Carlson et~al.(2010)Carlson, Betteridge, Kisiel, Settles, Hruschka,
  and Mitchell]{carlson_toward_2010}
Andrew Carlson, Justin Betteridge, Bryan Kisiel, Burr Settles, Estevam
  Hruschka, and Tom Mitchell.
\newblock Toward an architecture for never-ending language learning.
\newblock In \emph{Proceedings of the {AAAI} conference on artificial
  intelligence}, volume~24, pages 1306--1313, 2010.
\newblock Issue: 1.

\bibitem[Ehrlinger and W{\"o}{\ss}(2016)]{ehrlinger_towards_2016}
Lisa Ehrlinger and Wolfram W{\"o}{\ss}.
\newblock Towards a definition of knowledge graphs.
\newblock \emph{SEMANTiCS (Posters, Demos, SuCCESS)}, 48\penalty0
  (1-4):\penalty0 2, 2016.

\bibitem[Gottlob et~al.(1999)Gottlob, Leone, and
  Scarcello]{gottlob_hypertree_1999}
Georg Gottlob, Nicola Leone, and Francesco Scarcello.
\newblock Hypertree decompositions and tractable queries.
\newblock In \emph{Proceedings of the eighteenth {ACM}
  {SIGMOD}-{SIGACT}-{SIGART} symposium on {Principles} of database systems},
  pages 21--32, 1999.

\bibitem[Gottlob et~al.(2000)Gottlob, Leone, and
  Scarcello]{gottlob_comparison_2000}
Georg Gottlob, Nicola Leone, and Francesco Scarcello.
\newblock A comparison of structural {CSP} decomposition methods.
\newblock \emph{Artificial Intelligence}, 124\penalty0 (2):\penalty0 243--282,
  December 2000.
\newblock ISSN 0004-3702.
\newblock \doi{10.1016/S0004-3702(00)00078-3}.
\newblock URL
  \url{https://www.sciencedirect.com/science/article/pii/S0004370200000783}.

\bibitem[Greco and Scarcello(2013)]{greco_power_2013}
Gianluigi Greco and Francesco Scarcello.
\newblock On {The} {Power} of {Tree} {Projections}: {Structural} {Tractability}
  of {Enumerating} {CSP} {Solutions}.
\newblock \emph{Constraints}, 18\penalty0 (1):\penalty0 38--74, January 2013.
\newblock ISSN 1383-7133, 1572-9354.
\newblock \doi{10.1007/s10601-012-9129-8}.
\newblock URL \url{http://arxiv.org/abs/1005.1567}.
\newblock arXiv:1005.1567 [cs].

\bibitem[Hamilton et~al.(2018)Hamilton, Bajaj, Zitnik, Jurafsky, and
  Leskovec]{hamilton_embedding_2018}
Will Hamilton, Payal Bajaj, Marinka Zitnik, Dan Jurafsky, and Jure Leskovec.
\newblock Embedding logical queries on knowledge graphs.
\newblock \emph{Advances in neural information processing systems}, 31, 2018.

\bibitem[Kolaitis and Vardi(1998)]{kolaitis_conjunctive-query_1998}
Phokion~G Kolaitis and Moshe~Y Vardi.
\newblock Conjunctive-query containment and constraint satisfaction.
\newblock In \emph{Proceedings of the seventeenth {ACM}
  {SIGACT}-{SIGMOD}-{SIGART} symposium on {Principles} of database systems},
  pages 205--213, 1998.

\bibitem[Leskovec(2023)]{leskovec_databases_2023}
Jure Leskovec.
\newblock Databases as {Graphs}: {Predictive} {Queries} for {Declarative}
  {Machine} {Learning}.
\newblock In \emph{Proceedings of the 42nd {ACM} {SIGMOD}-{SIGACT}-{SIGAI}
  {Symposium} on {Principles} of {Database} {Systems}}, {PODS} '23, page~1, New
  York, NY, USA, 2023. Association for Computing Machinery.
\newblock ISBN 9798400701276.
\newblock \doi{10.1145/3584372.3589939}.
\newblock URL \url{https://doi.org/10.1145/3584372.3589939}.
\newblock event-place: Seattle, WA, USA.

\bibitem[Libkin and Sirangelo(2009)]{libkin_open_2009}
Leonid Libkin and Cristina Sirangelo.
\newblock Open and {Closed} {World} {Assumptions} in {Data} {Exchange}.
\newblock \emph{Description Logics}, 477, 2009.

\bibitem[Liu et~al.(2021)Liu, Du, Ji, Zhai, and
  Tong]{liu_neural-answering_2021}
Lihui Liu, Boxin Du, Heng Ji, ChengXiang Zhai, and Hanghang Tong.
\newblock Neural-{Answering} {Logical} {Queries} on {Knowledge} {Graphs}.
\newblock In \emph{Proceedings of the 27th {ACM} {SIGKDD} {Conference} on
  {Knowledge} {Discovery} \& {Data} {Mining}}, pages 1087--1097, 2021.

\bibitem[Liu et~al.(2022)Liu, Zhao, Su, Cen, Qiu, Zhang, Wu, Dong, and
  Tang]{liu_mask_2022}
Xiao Liu, Shiyu Zhao, Kai Su, Yukuo Cen, Jiezhong Qiu, Mengdi Zhang, Wei Wu,
  Yuxiao Dong, and Jie Tang.
\newblock Mask and {Reason}: {Pre}-{Training} {Knowledge} {Graph}
  {Transformers} for {Complex} {Logical} {Queries}.
\newblock In \emph{Proceedings of the 28th {ACM} {SIGKDD} {Conference} on
  {Knowledge} {Discovery} and {Data} {Mining}}, pages 1120--1130, August 2022.
\newblock \doi{10.1145/3534678.3539472}.
\newblock URL \url{http://arxiv.org/abs/2208.07638}.
\newblock arXiv:2208.07638 [cs].

\bibitem[Luus et~al.(2021)Luus, Sen, Kapanipathi, Riegel, Makondo, Lebese, and
  Gray]{luus_logic_2021}
Francois Luus, Prithviraj Sen, Pavan Kapanipathi, Ryan Riegel, Ndivhuwo
  Makondo, Thabang Lebese, and Alexander Gray.
\newblock Logic embeddings for complex query answering.
\newblock \emph{arXiv preprint arXiv:2103.00418}, 2021.

\bibitem[Poess and Floyd(2000)]{poess_new_2000}
Meikel Poess and Chris Floyd.
\newblock New {TPC} benchmarks for decision support and web commerce.
\newblock \emph{ACM Sigmod Record}, 29\penalty0 (4):\penalty0 64--71, 2000.
\newblock Publisher: ACM New York, NY, USA.

\bibitem[Ren et~al.(2020)Ren, Hu, and Leskovec]{ren_query2box_2020}
H~Ren, W~Hu, and J~Leskovec.
\newblock Query2box: {Reasoning} {Over} {Knowledge} {Graphs} {In} {Vector}
  {Space} {Using} {Box} {Embeddings}.
\newblock In \emph{International {Conference} on {Learning} {Representations}
  ({ICLR})}, 2020.

\bibitem[Ren and Leskovec(2020)]{ren_beta_2020}
Hongyu Ren and Jure Leskovec.
\newblock Beta embeddings for multi-hop logical reasoning in knowledge graphs.
\newblock \emph{Advances in Neural Information Processing Systems},
  33:\penalty0 19716--19726, 2020.

\bibitem[Ren et~al.(2023)Ren, Galkin, Cochez, Zhu, and
  Leskovec]{ren_neural_2023}
Hongyu Ren, Mikhail Galkin, Michael Cochez, Zhaocheng Zhu, and Jure Leskovec.
\newblock Neural {Graph} {Reasoning}: {Complex} {Logical} {Query} {Answering}
  {Meets} {Graph} {Databases}, March 2023.
\newblock URL \url{http://arxiv.org/abs/2303.14617}.
\newblock arXiv:2303.14617 [cs].

\bibitem[Rossi et~al.(2006)Rossi, van Beek, and Walsh]{rossi_handbook_2006}
Francesca Rossi, Peter van Beek, and Toby Walsh.
\newblock \emph{Handbook of {Constraint} {Programming}}.
\newblock Elsevier Science Inc., USA, 2006.
\newblock ISBN 978-0-08-046380-3.

\bibitem[Suchanek et~al.(2007)Suchanek, Kasneci, and
  Weikum]{suchanek_yago_2007}
Fabian~M Suchanek, Gjergji Kasneci, and Gerhard Weikum.
\newblock Yago: a core of semantic knowledge.
\newblock In \emph{Proceedings of the 16th international conference on {World}
  {Wide} {Web}}, pages 697--706, 2007.

\bibitem[Toutanova and Chen(2015)]{toutanova_observed_2015}
Kristina Toutanova and Danqi Chen.
\newblock Observed versus latent features for knowledge base and text
  inference.
\newblock In \emph{Proceedings of the 3rd workshop on continuous vector space
  models and their compositionality}, pages 57--66, 2015.

\bibitem[Vrande{\v c}i{\'c} and Kr{\"o}tzsch(2014)]{vrandecic_wikidata_2014}
Denny Vrande{\v c}i{\'c} and Markus Kr{\"o}tzsch.
\newblock Wikidata: a free collaborative knowledgebase.
\newblock \emph{Communications of the ACM}, 57\penalty0 (10):\penalty0 78--85,
  2014.
\newblock Publisher: ACM New York, NY, USA.

\bibitem[Wang et~al.(2021)Wang, Yin, and Song]{wang_benchmarking_2021}
Zihao Wang, Hang Yin, and Yangqiu Song.
\newblock Benchmarking the {Combinatorial} {Generalizability} of {Complex}
  {Query} {Answering} on {Knowledge} {Graphs}.
\newblock \emph{Proceedings of the Neural Information Processing Systems Track
  on Datasets and Benchmarks}, 1, December 2021.
\newblock URL
  \url{https://datasets-benchmarks-proceedings.neurips.cc/paper/2021/hash/7eabe3a1649ffa2b3ff8c02ebfd5659f-Abstract-round2.html}.

\bibitem[Wang et~al.(2022)Wang, Yin, and Song]{wang_logical_2022}
Zihao Wang, Hang Yin, and Yangqiu Song.
\newblock Logical {Queries} on {Knowledge} {Graphs}: {Emerging} {Interface} of
  {Incomplete} {Relational} {Data}.
\newblock \emph{Data Engineering}, page~3, 2022.

\bibitem[Wang et~al.(2023{\natexlab{a}})Wang, Fei, Yin, Song, Wong, and
  See]{wang_wasserstein-fisher-rao_2023}
Zihao Wang, Weizhi Fei, Hang Yin, Yangqiu Song, Ginny~Y Wong, and Simon See.
\newblock Wasserstein-{Fisher}-{Rao} {Embedding}: {Logical} {Query}
  {Embeddings} with {Local} {Comparison} and {Global} {Transport}.
\newblock \emph{arXiv preprint arXiv:2305.04034}, 2023{\natexlab{a}}.

\bibitem[Wang et~al.(2023{\natexlab{b}})Wang, Song, Wong, and
  See]{wang_logical_2023}
Zihao Wang, Yangqiu Song, Ginny Wong, and Simon See.
\newblock Logical {Message} {Passing} {Networks} with {One}-hop {Inference} on
  {Atomic} {Formulas}.
\newblock In \emph{The {Eleventh} {International} {Conference} on {Learning}
  {Representations}}, 2023{\natexlab{b}}.
\newblock URL \url{https://openreview.net/forum?id=SoyOsp7i_l}.

\bibitem[Xu et~al.(2022)Xu, Zhang, Ye, Chen, and Chen]{xu_neural-symbolic_2022}
Zezhong Xu, Wen Zhang, Peng Ye, Hui Chen, and Huajun Chen.
\newblock Neural-{Symbolic} {Entangled} {Framework} for {Complex} {Query}
  {Answering}, September 2022.
\newblock URL \url{http://arxiv.org/abs/2209.08779}.
\newblock arXiv:2209.08779 [cs].

\bibitem[Yin et~al.(2023)Yin, Wang, and Song]{yin_existential_2023}
Hang Yin, Zihao Wang, and Yangqiu Song.
\newblock On {Existential} {First} {Order} {Queries} {Inference} on {Knowledge}
  {Graphs}, April 2023.
\newblock URL \url{http://arxiv.org/abs/2304.07063}.
\newblock arXiv:2304.07063 [cs].

\bibitem[Zhang et~al.(2021)Zhang, Wang, Chen, Ji, and Wu]{zhang_cone_2021}
Zhanqiu Zhang, Jie Wang, Jiajun Chen, Shuiwang Ji, and Feng Wu.
\newblock Cone: {Cone} embeddings for multi-hop reasoning over knowledge
  graphs.
\newblock \emph{Advances in Neural Information Processing Systems},
  34:\penalty0 19172--19183, 2021.

\bibitem[Zhou et~al.(2007)Zhou, Ren, Medo, and Zhang]{zhou_bipartite_2007}
Tao Zhou, Jie Ren, Mat{\'u}{\v s} Medo, and Yi-Cheng Zhang.
\newblock Bipartite network projection and personal recommendation.
\newblock \emph{Physical review E}, 76\penalty0 (4):\penalty0 046115, 2007.
\newblock Publisher: APS.

\end{thebibliography}

\clearpage

\appendix

\section{Related works}\label{app: related works}


Answering complex queries on knowledge graphs differs from database query answering by being a data-driven task~\cite{wang_logical_2022}, where the open-world assumption is addressed by methods that learn from data. Meanwhile, learning-based methods enable faster neural approximate solutions of symbolic query answering problems~\cite{ren_neural_2023}.

The prevailing way is query embedding, where the computational results are embedded and computed in the low-dimensional embedding space. Specifically, the query embedding over the set operator trees is the earliest proposed~\cite{hamilton_embedding_2018}. The supported set operators include projection\cite{hamilton_embedding_2018}, intersection~\cite{ren_query2box_2020}, union, and negation~\cite{ren_beta_2020}, and later on be improved by various designs~\cite{xu_neural-symbolic_2022,bai_query2particles_2022,wang_wasserstein-fisher-rao_2023}. Such methods assume queries can be converted into the recursive execution of set operations, which imposes additional assumptions on the solvable class of queries~\cite{wang_benchmarking_2021}. These assumptions introduce additional limitations of such query embeddings

Recent advancements in CQA models surpass the query embedding methods by adopting query graph representation and graph neural networks, supporting atomic formulas~\cite{liu_mask_2022} and negated atomic formulas~\cite{wang_logical_2023}. Query embedding on graphs bypasses the assumptions for queries~\cite{wang_benchmarking_2021}. Meanwhile, other search-based inference methods~\cite{arakelyan_complex_2021,yin_existential_2023} are rooted in fuzzy calculus and not subject to the query assumptions~\cite {wang_benchmarking_2021}.

Though many efforts have been made, the datasets of complex query answering are usually subject to the assumptions by set operator query embeddings~\cite{wang_benchmarking_2021}. Many other datasets are proposed to enable queries with additional features, see~\cite{ren_neural_2023} for a comprehensive survey of datasets. However, only one small dataset proposed by~\cite{yin_existential_2023} introduced queries and answers beyond such assumptions~\cite{wang_benchmarking_2021}. It is questionable that this small dataset is fair enough to justify the advantages claimed in advancement methods~\cite{wang_logical_2023,yin_existential_2023} that aim at complex query answering. Moreover, query with multiple free variables has not been investigated. Therefore, the dataset~\cite{yin_existential_2023} is still far away from the systematical evaluation as~\cite{wang_benchmarking_2021} and $\efok$-CQA proposed in this paper fills this gap.

\section{Details of constraint satisfaction problem}\label{app:detail of CSP}

In this section, we introduce the constraint satisfaction problem (CSP) again. One instance of CSP $\mathcal{P}$ can be represented by a triple $\mathcal{P}=(X,D,C)$ where $X=(x_1,\cdots, x_n)$ is an $n$-tuple of variables, $D=(D_1,\cdots, D_n)$ is the corresponding $n$-tuple of domains for each variable $x_i$. Then, $C=(C_1,\cdots, C_t)$ is $t$-tuple of constraints, each constraint $C_i$ is a pair of $(S_i, R_{S_i})$ where $S_i$ is called the scope of the constraint, contains $k$ corresponding variables, which means it is a set of variables $S_i=\{x_{i_j}\}_{j=1}^{k}$ and $R_{{S_i}}$ is the constraint over those variables~\cite{rossi_handbook_2006}, meaning that $R_{S_i}$ is a subset of the cartesian product of variables in $S_i$.

The answer of the CSP instance $\mathcal{P}$ is a $n$-tuple $(a_1,\cdots,a_n)$ which is essentially an assignment for all variables $x_i$, such that: 
\begin{align*}
    a_i\in D_i,~\forall i=1,\cdots,n & \\
    (a_{i_1}, \cdots, a_{i_k} ) \in R_{S_i}    ,~\forall i=1,\cdots,t
\end{align*}

Then the formulation of existential conjunctive formulas as CSP has already been discussed in Section~\ref{sec: conjunctive formula as CSP}. Additionally, for the negation of atomic formula $\lnot r(h,t)$, we note the constraint $C$ is also binary with $S_i = \{h,t\}$, $R_{S_i}=\{(h,t) | h,t\in \entity, (h,r,t)\notin \mathcal{KG}\}$, this means that $R_{S_i}$ is a very large set, thus the constraint is less ``strict'' than the positive ones, explaining why we treat negation separately in Assumption~\ref{asm:meaningful negation}.

\section{Construction of the whole $\efok$-CQA datset}\label{app:construction of efok}

In this section, we provide details for the construction of the $\efok$-CQA dataset.
 
\subsection{Enumeration of the abstract query graphs}\label{app:proof create qg}

\begin{figure*}[t]
\centering
\includegraphics[width=.9\linewidth]{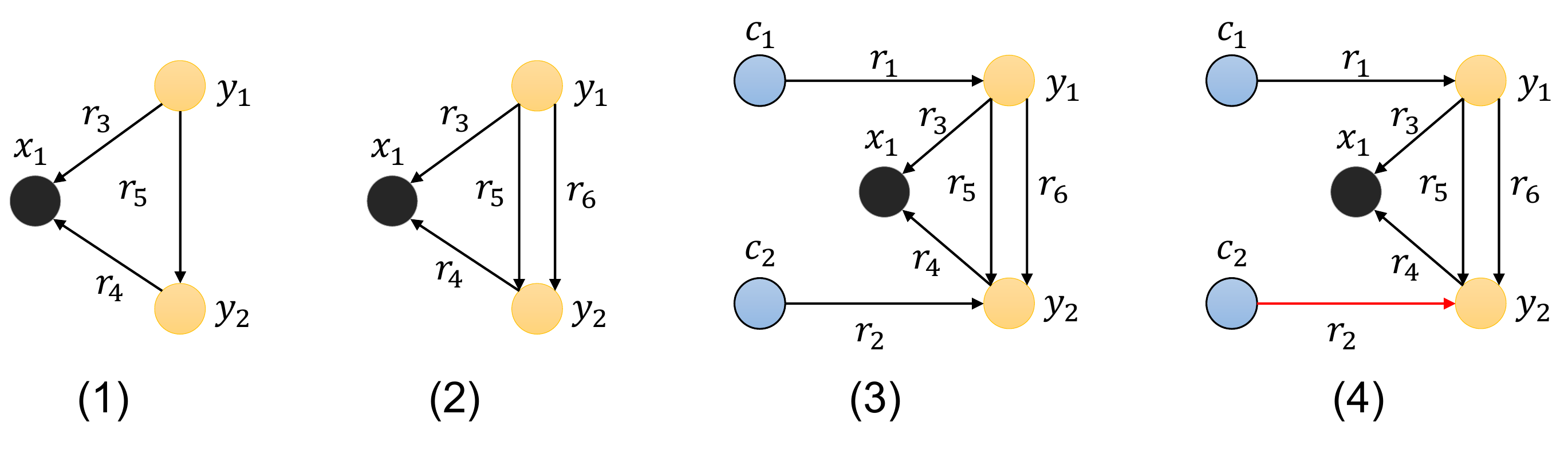}
\caption{The four steps of enumerating the abstract query graphs. We note that the example and representation follow Figure~\ref{fig: query framework}.}
\label{fig: abstract query graph steps}
\end{figure*}

To investigate the whole combinatorial space for the $\efok$ queries, we need to give propositions of the property of abstract query graph based on the assumptions given in~\ref{sec:nontrivial query}:

\begin{proposition}\label{prop: remove constant}
    For an abstract query graph $\mathcal{G}$, if it conforms Assumption~\ref{asm:no redundancy} and Assumption~\ref{asm:no decomposition}, then removing all constant entities in $\mathcal{G}$ will lead to only one connected component and no edge is connected between two constant entities.
\end{proposition}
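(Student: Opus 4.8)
The plan is to prove both conclusions by contradiction, in each case exhibiting a decomposition of $\mathcal{A}[I(\mathcal{G})]$ into a Cartesian product that holds \emph{uniformly over all groundings} $I$, thereby violating Assumption~\ref{asm:no decomposition}. The single observation driving both arguments is that a constant node is assigned one fixed entity by any grounding $I$; consequently a constraint (edge) that, after grounding, is attached only to constants carries no information linking distinct variables, and the constraint-satisfaction problem induced by $I(\mathcal{G})$ (Definition~\ref{def: CSP answer of existential formula}) factorizes across any cut that separates the variable nodes through constants alone.

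First I would dispose of the second claim. Suppose an edge $e$ joins two constant nodes $a,b$. Let $\mathcal{G}_1$ be the subgraph consisting of $e$ together with $a,b$, and let $\mathcal{G}_2=\mathcal{G}-e$ retain every node and all remaining edges. Since $\mathcal{G}_1$ contains no variable, for any grounding $I$ its answer $\mathcal{A}[I(\mathcal{G}_1)]$ is a set of $0$-tuples: it equals $\{()\}$ when the grounded constraint on $(a,b)$ holds and $\emptyset$ otherwise. In the first case $\mathcal{A}[I(\mathcal{G})]=\mathcal{A}[I(\mathcal{G}_2)]=\mathcal{A}[I(\mathcal{G}_1)]\bigtimes\mathcal{A}[I(\mathcal{G}_2)]$, because a constraint among fixed entities does not restrict the variable assignments; in the second case both sides are $\emptyset$. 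Hence the identity $\mathcal{A}[I(\mathcal{G})]=\mathcal{A}[I(\mathcal{G}_1)]\bigtimes\mathcal{A}[I(\mathcal{G}_2)]$ holds for every $I$, contradicting Assumption~\ref{asm:no decomposition}.

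For the first claim, let $\mathcal{G}'$ denote $\mathcal{G}$ with all constant nodes (and their incident edges) deleted, and suppose $\mathcal{G}'$ has at least two connected components. Group them as $C_1$ and $C_2$ (the rest), and let $\mathcal{G}_1$ (resp. $\mathcal{G}_2$) be the subgraph induced by the variables of $C_1$ (resp. $C_2$) together with every constant adjacent to them and the corresponding edges; note $\mathcal{G}_1,\mathcal{G}_2$ may share constant nodes but share no variable. The point is that every edge of $\mathcal{G}$ either lies inside one $\mathcal{G}_i$ or joins a variable to a constant, since a variable--variable edge between $C_1$ and $C_2$ would survive the deletion and contradict their being distinct components. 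Fixing the constants via $I$ therefore leaves no constraint coupling the variables of $\mathcal{G}_1$ with those of $\mathcal{G}_2$: an assignment satisfies $I(\mathcal{G})$ iff its restrictions satisfy $I(\mathcal{G}_1)$ and $I(\mathcal{G}_2)$ separately, and the existential variables, being partitioned between the two parts, can be chosen independently. Projecting onto the free variables (which are likewise partitioned) yields $\mathcal{A}[I(\mathcal{G})]=\mathcal{A}[I(\mathcal{G}_1)]\bigtimes\mathcal{A}[I(\mathcal{G}_2)]$ for every $I$---again contradicting Assumption~\ref{asm:no decomposition}. I would treat the degenerate case in which one part carries no free variable by the same formula, using that a free-variable-free subgraph contributes the factor $\{()\}$ or $\emptyset$ exactly as above (here Assumption~\ref{asm:no redundancy} additionally rules out the purely redundant subcase).

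The main obstacle is the faithful justification of the factorization $\mathcal{A}[I(\mathcal{G})]=\mathcal{A}[I(\mathcal{G}_1)]\bigtimes\mathcal{A}[I(\mathcal{G}_2)]$ rather than the contradiction itself. I must argue at the level of the CSP of Definition~\ref{def: CSP answer of existential formula} that (i) satisfiability factorizes once the shared constants are pinned to the same entities by $I$, (ii) the existential variables split cleanly between the two parts so that their witnesses are chosen independently, and (iii) the bookkeeping of tuple arities is correct, including the boundary cases where a part has no free variable and its answer set is a subset of $\{()\}$, so that the Cartesian product acts as the identity or the annihilator. Getting these three points right, together with the observation that cross-part edges only meet constants, is the crux; the appeal to Assumption~\ref{asm:no decomposition} is then immediate.
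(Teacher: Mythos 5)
Your proof is correct and, for the central claim (connectivity after deleting constants), it follows exactly the paper's route: partition the variable components into two groups, adjoin the incident constants to each side, observe that every cross-part edge must pass through a constant, and conclude that $\mathcal{A}[I(\mathcal{G})]=\mathcal{A}[I(\mathcal{G}_1)]\bigtimes\mathcal{A}[I(\mathcal{G}_2)]$ for every grounding $I$, contradicting Assumption~\ref{asm:no decomposition}. Where you diverge is the treatment of the degenerate cases. The paper dispatches both the constant--constant edge and the free-variable-free component via Assumption~\ref{asm:no redundancy}, asserting that such a piece is ``redundant'' (a sentence with a fixed truth value); you instead fold them into Assumption~\ref{asm:no decomposition} by letting a variable-free part contribute a factor $\{()\}$ or $\emptyset$ to the Cartesian product. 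Your version is arguably tighter: strictly speaking, a constant--constant edge $e$ is not redundant in the sense of Assumption~\ref{asm:no redundancy} for a grounding under which the grounded triple fails, since then $\mathcal{A}[I(\mathcal{G})]=\emptyset$ while $\mathcal{A}[I(\mathcal{G}-e)]$ need not be, whereas your factorization identity holds uniformly in both cases. The only cost is that you read Assumption~\ref{asm:no decomposition} as permitting a factor with zero free variables, a mild extension of the paper's definition of answer sets (stated only for $k\ge 1$); the paper avoids this by leaning on redundancy, at the price of the looseness just noted. Either reading yields the proposition.
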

\begin{proof}
      We prove this by contradiction. If there is an edge (whether positive or negative) between constant entities, then this edge is redundant, violating Assumption~\ref{asm:no redundancy}. Then, if there is more than one connected component after removing all constant entities in $\mathcal{G}$. Suppose one connected component has no free variable, then this part  is a sentence and thus has a certain truth value, whether 0 or 1, which is redundant, violating Assumption~\ref{asm:no redundancy}. Then, we assume every connected component has at least one free variable, we assume there is $m$ connected component and we have:
    \begin{equation*}
        Node(\mathcal{G})=(\cup_{i=1}^{m} Node(\mathcal{G}_{i})) \cup Node(\mathcal{G}_c)
    \end{equation*}
    where $m > 1$, the $\mathcal{G}_c$ is the set of constant entities and each $\mathcal{G}_i$ is the graph of a connected component, we use $Node(\mathcal{G})$ to denote the node set for a graph $\mathcal{G}$. Then this equation describes the partition of the node set of the original $\mathcal{G}$.

    Then, we construct $\mathcal{G}_a = G[Node(\mathcal{G}_1)\cup \mathcal{G}_c]$ and $\mathcal{G}_b = G[(\cup_{i=2}^{m}Node(\mathcal{G}_{i}))\cup Node(\mathcal{G}_c)]$, where $G$ represents the induced graph. Then we naturally have that $\mathcal{A}[I(\mathcal{G})] = \mathcal{A}[I(\mathcal{G}_{a})] \bigtimes \mathcal{A}[I(\mathcal{G}_{b})]$, where the $\bigtimes$ represents the Cartesian product, violating Assumption~\ref{asm:no decomposition}.
    


    
\end{proof}

Additionally, as mentioned in Appendix~\ref{app:detail of CSP}, the negative constraint is less ``strict'', we formally put an additional assumption of the real knowledge graph as the following:

\begin{assumption}\label{asm:sparse knowledge graph}
    For any knowledge graph $\mathcal{KG}$, with its entity set $\entity$ and relation set $\relation$, we assume it is sparse with regard to each relation, meaning: for any $r\in \relation, |\{a\in \entity| \exists b.~(a,r,b)\in \mathcal{KG} \text{ or } (b,r,a)\in \mathcal{KG}\}| \ll \entity $ 
\end{assumption}

Then we develop another proposition for the abstract query graph:

\begin{proposition}\label{prop: new bounded negation}
    With the knowledge graph conforming Assumption~\ref{asm:sparse knowledge graph}, for any node $u$ in the abstract query graph $\mathcal{G}$, if $u$ is an existential variable or free variable, then it can not only connect with negative edges.
\end{proposition}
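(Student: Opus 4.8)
The plan is to argue by contradiction: suppose some node $u$ that is an existential or free variable is incident only to negative edges. The crucial consequence of Assumption~\ref{asm:sparse knowledge graph} is that each such negative constraint excludes only a vanishingly small set of admissible values for $u$. Concretely, for any grounding $I$ and any fixed assignment of $u$'s neighbors, an incident negative edge grounded as $\lnot r(u,v)$ with $v$ assigned the entity $b$ forbids $u$ exactly from $\{a\in\entity : (a,r,b)\in\mathcal{KG}\}$, which is a subset of the entities participating in relation $r$ and hence of size $\ll|\entity|$ by Assumption~\ref{asm:sparse knowledge graph}; the symmetric case $\lnot r(v,u)$ is identical. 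Since $u$ has only finitely many incident edges (bounded by the fixed number of edges of $\mathcal{G}$), the union of all these forbidden sets still has size $\ll|\entity|$, so almost every entity of $\entity$ is an admissible value for $u$. I would isolate this sparsity estimate as the first step, since both cases below rest on it.

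For the case where $u$ is an existential variable, I would show that $u$ is \emph{redundant}. For any grounding $I$ and any assignment of the remaining variables, the quantifier $\exists u$ ranges over a domain from which only a set of size $\ll|\entity|$ is excluded, so a witness for $u$ always exists; hence the conjunct involving $u$ is identically true and contributes nothing. Deleting $u$ together with its incident edges therefore yields a proper subgraph $\mathcal{G}_s\subsetneq\mathcal{G}$ with $\mathcal{A}[I(\mathcal{G})]=\mathcal{A}[I(\mathcal{G}_s)]$ for every grounding $I$, contradicting Assumption~\ref{asm:no redundancy}.

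For the case where $u$ is a free variable $y_i$, the redundancy argument no longer applies, since deleting a free variable would change the arity of the answer tuple; instead I would invoke Assumption~\ref{asm:appropriate answer size}. Fix any grounding $I$ whose query is satisfiable (guaranteed by sampling the positive subgraph to have answers, cf. Section~\ref{sec: ground abstract query graph}) and pick one answer tuple $(a_1,\dots,a_k)$. Because $y_i$ occurs only in negative, binary constraints, replacing $a_i$ by any entity outside the small forbidden set determined by the fixed values of its neighbors leaves every other constraint untouched and hence again yields a valid answer. Thus the candidate set $\{a_i\in\entity : (a_1,\dots,a_i,\dots,a_k)\in\mathcal{A}[I(\mathcal{G})]\}$ contains all but a set of size $\ll|\entity|$, so its cardinality exceeds $M$. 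Since this holds for every grounding, no grounding of $\mathcal{G}$ satisfies Assumption~\ref{asm:appropriate answer size}, and as only graphs admitting such groundings are retained, this case is impossible.

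The main obstacle I anticipate is the bookkeeping around the informal $\ll$ relation: I must argue that a finite union of ``small'' forbidden sets remains strictly smaller than $\entity$, so that an admissible value genuinely survives, and, in the free-variable case, that swapping $a_i$ leaves all remaining constraints intact (which holds precisely because every edge at $y_i$ is negative and binary, coupling $y_i$ only to its neighbors). The conceptual crux is pairing each case with the correct assumption---no-redundancy for existential nodes versus appropriate-answer-size for free nodes---while the sparsity estimate serves as the shared technical engine for both.
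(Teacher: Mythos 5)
Your proof is correct and follows essentially the same route as the paper's: establish that the union of the sets of entities excluded by the incident negative edges is still $\ll|\entity|$ (the paper uses the slightly larger, assignment-independent ``endpoint'' sets $\{a\in\entity\mid\exists b.\ (a,r,b)\in\mathcal{KG}\text{ or }(b,r,a)\in\mathcal{KG}\}$), then conclude redundancy of $u$ via Assumption~\ref{asm:no redundancy} when $u$ is existential, and a violation of Assumption~\ref{asm:appropriate answer size} when $u$ is free. Your explicit handling of satisfiability in the free-variable case is a minor tightening of the paper's argument, not a different approach.
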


\begin{proof}

     Suppose $u$ only connects to $m$ negative edge $e_1,\cdots,e_m$. For any grounding $I$, we assume $I(e_i)=r_i\in \relation$. For each $r_i$, we construct its endpoint set $$\text{Endpoint}(r_i)=\{a\in \entity| \exists b. (a,r,b)\in \mathcal{KG} \text{ or } (b,r,a)\in \mathcal{KG}\}$$ by the assumption~\ref{asm:sparse knowledge graph}, we have $|\text{Endpoint}(r_i)| \ll \entity$, then we have:  $$|\cup_{i=1}^{m}  \text{Endpoint}(r_i)| \leq \Sigma_{i=1}^{m} |\text{Endpoint}(r_i)| \ll \entity$$
    since $m$ is small due to the size of the abstract query graph. Then we have two situations about the type of node $u$:

    \noindent\textbf{1.If node $u$ is an existential variable.} 

    Then we construct a subgraph $\mathcal{G}_s$ be the induced subgraph of $Node(\mathcal{G})-u$, then for any possible grounding $I$, we prove that $\mathcal{A}[I(\mathcal{G}_s)]$=$\mathcal{A}[I(\mathcal{G})]$, the right is clearly a subset of the left due to it contains more constraints, then we show every answer of the left is also an answer on the right, we merely need to give an appropriate assignment in the entity set for node $v$, and in fact, we choose any entity in the set $\entity - \cup_{i=1}^{m}  \text{Endpoint}(r_i)$ since it suffices to satisfies all constraints of node $u$, and we have proved that $|\entity - \cup_{i=1}^{m}  \text{Endpoint}(r_i)| > 0$.

    This violates the Assumption~\ref{asm:no redundancy} as node $u$ is redundant.

    \noindent\textbf{2.If node $u$ is a free variable.} 

    Similarly, any entity in the set $\entity - \cup_{i=1}^{m}  \text{Endpoint}(r_i)$ will be an answer for the node $u$, thus violating the Assumption~\ref{asm:appropriate answer size}.
    
\end{proof}

We note the proposition~\ref{prop: new bounded negation} extends the previous requirement about negative queries, which is firstly proposed in \cite{ren_beta_2020}, inherited and termed as ``bounded negation'' in~\cite{wang_benchmarking_2021}, the ``bounded negation'' requires the negation operator should be followed by the intersection operator in the operator tree. Obviously, the abstract query graph that conforms to ``bounded negation'' will also conform to the requirement in Proposition~\ref{prop: new bounded negation}, however, an abstract query graph may abide by Proposition~\ref{prop: new bounded negation} but violates ``bounded negation'' and still represents meaningful queries. A vivid example is offered in Figure~\ref{fig: nontrivial query}, showing that our propositions about the abstract query graph are more satisfactory.

Finally, we make a more detailed assumption of diameters of the query graph:

\begin{assumption}[Appropriate diameter]
    There is a constant number $d$, such that for every node $u$ in the abstract query graph $\mathcal{G}$, it can find a free variable in its $d$-hop neighbor.
\end{assumption}

We have this assumption to exclude the extremely long-path queries that have large graph diameters.


Equipped with the propositions and assumptions above, we explore the combinatorial space of the abstract query graph given certain hyperparameters, including: the max number of free variables, max number of existential variables, max number of constant entities, max number of all nodes, max number of all edges, max number of edges surpassing the number of nodes, max number of negative edges, max distance to the free variable. In practice, these numbers are set to be: 2, 2, 3, 6, 6, 0, 1, 3. We note that the max number of edges surpassing the number of nodes is set to 0, which means that the query graph can at most have one more edge than a simple tree, thus, we exclude those query graphs that are both cyclic graphs and multigraphs, making our categorization and discussion in the experiments in Section~\ref{sec: EFO1 result} and Section~\ref{sec: EFO2 result} much more straightforward and clear.

Then, we create the abstract query graph by the following steps, which is a graph with three types of nodes and two kinds of edges:

\begin{enumerate}
    \item First, create a simple connected graph  $\mathcal{G}_1$ with two types of nodes, the existential variable and the free variable, and one type of edge, the positive edge.
    \item We add additional edges to the simple graph $\mathcal{G}_1$ and make it a multigraph $\mathcal{G}_2$.
    \item Then, the constant variable is added to the graph $\mathcal{G}_2$,  In this step, we make sure not too long existential leaves. The result is graph $\mathcal{G}_3$.
    \item Finally, random edges in $\mathcal{G}_3$ are replaced by the negation edge, and we get the final abstract query graph $\mathcal{G}_4$.
\end{enumerate}

In this way, all possible query graphs within a certain combinatorial space are enumerated, and finally, we filter duplicated graphs with the help of the graph isomorphism algorithm. We give an example to illustrate the four-step construction of an abstract query graph in Figure~\ref{fig: abstract query graph steps}.

\subsection{Ground abstract query graph with meaningful negation}\label{app:sample negative query}

To fulfill the Assumption~\ref{asm:meaningful negation} as discussed in Section~\ref{sec: ground abstract query graph}, for an abstract query graph $\mathcal{G}=(V,E,f,g)$, we have two steps:
(1). Sample grounding for the positive subgraph $\mathcal{G}_p$ and compute its answer (2). Ground the $\mathcal{G}_n$ to decrease the answer got in the first step.
To be specific, we define positive subgraph $\mathcal{G}_p$ to be defined as such, its edge set $E^{\prime} = \{e\in E | g(e)= positive\}$, its node set $V^{\prime}=\{u| u\in V, \exists e\in E^{\prime} \text{ and } e \text{ connects to } u \}$. Then $\mathcal{G}_p$=$(V^{\prime},E^{\prime},f,g)$. We note that because of Proposition~\ref{prop: new bounded negation}, if a node $u\in V-V^{\prime}$, then we know node $u$ must be a constant entity. 

Then we sample the grounding for the positive subgraph $\mathcal{G}_p$, we also compute the CSP answer for each variable $u_i$ in the query graph, 
 let $\mathcal{\overline{A}}_p$ be the whole answer for this subgraph, then we only need to compute for each variable $u_i$, its projected answer: $$\mathcal{\overline{A}}_p^{u_i}=\{a_o\in \entity|~\exists (a_1,\cdots,a_i,\cdots,a_k) \in \mathcal{\overline{A}}_p\}$$ where $k$ is the number of the variables.

 We note that computing $\mathcal{\overline{A}}_p$ is costly while computing every $\mathcal{\overline{A}}_p^{i}$ is efficient, as we only need to meet the consistency condition rather than do all the backtracking in the traditional CSP algorithm~\cite{rossi_handbook_2006}.

Then we ground what is left in the positive subgraph, we split each negative edge in $E-E^{\prime}$ into two categories:

\noindent\textbf{1. This edge $e$ connects two nodes $u_1,u_2$, and $u_1,u_2\in V^{\prime}$.}

In this case, we sample the relation $r$ to be the grounding of $e$ such that it negates some of the answers in $\mathcal{\overline{A}}_p$, actually, considering the set $\mathcal{\overline{A}}_p^{u_1}$ and $\mathcal{\overline{A}}_p^{u_2}$ is enough.

\noindent\textbf{2. This edge $e$ connects two nodes $u_1,u_2$, where $u_1 \in V^{\prime}$, while $u_2\notin V^{\prime}$.}

In this case, we sample the relation $r$ for $e$ and entity $a$ for $u_2$ such that they negate some answer in $\mathcal{\overline{A}}_p$, we note we only need to consider the \textbf{local} information, namely, possible candidates for node $u_i$, the set $\mathcal{\overline{A}}_p^{u_i}$ and it is quite efficient. However, computing the answer set as existential formula requires considering the \textbf{whole} query graph.

We note that there is no possibility that neither of the endpoints is in $V^{\prime}$ because as we have discussed above, this means that both nodes are constant entities, but in Proposition~\ref{prop: remove constant} we have asserted that no edge is connected between two entities.

\subsection{The comparison to previous benchmark}\label{app:extend benchmark}

As discussed in Section~\ref{sec:nontrivial query}, the scope of the formula investigated in our $\efok$-CQA dataset surpasses the previous EFO-1-QA  benchmark because of three reasons: (1). We include the $\efok$ formula with multiple free variables; (2). We include those $\textsc{EFO}_1$ queries that can not be represented by operator tree; (3) Our assumption is more systematic than previous ones as shown by the example in Figure~\ref{fig: nontrivial query}. Though we only contain 741 query types while the  EFO-1-QA  benchmark contains 301 query types, we list reasons for the number of query types is not significantly larger than the previous benchmark: (1). EFO-1-QA  benchmark relies on the operator tree that contains union, which represents the logic conjunction($\lor$), however, we only discuss the conjunctive queries because we always utilize the DNF of a query. We notice that there are only 129 query types in EFO-1-QA without the union, significantly smaller than the $\efok$-CQA dataset. (2). In the construction of $\efok$-CQA dataset, we restrict the query graph to have at most one negative edge to avoid the total number of query types growing quadratically, while in EFO-1-QA benchmark, their restrictions are different than ours and it contains queries that have two negative atomic formulas.

\subsection{$\efok$-CQA statistics}\label{app:EFOX statistics}

The statistics of our $\efok$-CQA dataset are shown in Table~\ref{tab: abstract query graph statistics of EFO1} and Table~\ref{tab: abstract query graph statistics of EFO2}, they show the statistics of our abstract query graph by their topology property, the statistics are split into the situation that the number of free variable $k=1$ and the number of free variable $k=2$, correspondingly. We note abstract query graphs with seven nodes have been excluded as the setting of hyperparameters discussed in Appendix~\ref{app:proof create qg}, we make these restrictions to control the quadratic growth in the number of abstract query graphs.

Finally, in FB15k-237, we sample 1000 queries for an abstract query graph without negation, 500 queries for an abstract query graph with negation; in FB15k, we sample 800 queries for an abstract query graph without negation, 400 queries for an abstract query graph with negation; in NELL, we sample 400 queries for an abstract query graph without negation, 100 queries for an abstract query graph with negation. As we have discussed in Appendix~\ref{app:sample negative query}, sample negative query is computationally costly, thus we sample less of them.

\begin{table}[t]
\caption{The number of abstract query graphs with one free variable. We denote $e$ as the number of existential variables and $c$ as the number of constant entities. SDAG represents the Simple Directed Acyclic Graph, Multi for multigraph, and Cyclic for the cyclic graph. Sum.($c$) and Sum.($e$) is the total number of queries with the number of constant entities / existential variables fixed.}
\label{tab: abstract query graph statistics of EFO1}
\centering
\footnotesize
\begin{tabular}{cccccccccc}
\toprule
\multirow{2}{*}{\diagbox{$c$}{$e$}} & 0   & \multicolumn{2}{c}{1} & \multicolumn{3}{c}
      {2} & \multirow{2}{*}{Sum.($c$)} & \multirow{2}{*}{Sum.} \\
      \cmidrule(lr){2-2} \cmidrule(lr){3-4} \cmidrule(lr){5-7}
                   & SDAG & SDAG      & Multi      & SDAG  & Multi & Cyclic &        &     \\
\midrule                  
1 &1&2&4&4&16&4&31&\multirow{3}{*}{251}   \\
2 &2&6&6&20&40&8&82  &  \\
3 &2&8&8&36&72&12&138 & \\\cmidrule(lr){2-8}
Sum.($e$)&5&16&18&60&128&24&\\
\bottomrule
\end{tabular}
\end{table}

\begin{table}[t]
\centering
\caption{The number of abstract query graphs with two free variables. The notation of $e$, $c$  SDAG, Multi, and Cyclic are the same as Table~\ref{tab: abstract query graph statistics of EFO1}. And "-" means that this type of abstract query graph is not included.} 
\label{tab: abstract query graph statistics of EFO2}
\footnotesize
\begin{tabular}{crrrrrrrrr}
\toprule
\multirow{2}{*}{\diagbox{$c$}{$e$}}  & \multicolumn{2}{c}{$e=0$} & \multicolumn{3}{c}{$e=1$} & \multicolumn{3}{c}{$e=2$} & \multirow{2}{*}{AVG.} \\
\cmidrule(lr){2-3} \cmidrule(lr){4-6} \cmidrule(lr){7-9} & SDAG      & Multi &  SDAG  & Multi & Cyclic &  SDAG  & Multi & Cyclic & \\
\midrule
$c=1$&1&2&7&18&4&6&32&26&96\\
$c=2$&4&4&20&36&8&38&108&64&282 \\
$c=3$&4&4&32&60&12&-&-&-&112\\
\midrule

\bottomrule
\end{tabular}
\end{table}

\section{Evaluation details}\label{app:evaluation detail}
\begin{algorithm}[htbp]
\caption{Embedding computation on the query graph.}\label{alg:QG embedding}
\begin{algorithmic}
\REQUIRE The query graph $G$.
\STATE Compute the ordering of the nodes as explained in Algorithm~\ref{alg:QG node ordering}.
\STATE Create a dictionary $E$ to store the embedding for each node in the query graph
\FOR{$i \gets 1$ to $n$ }
    \IF{node $u_i$ is a constant entity}
        \STATE The embedding of $u_i$, $E[i]$ is gotten from the entity embedding
    \ELSE
        \STATE Then we know node $u_i$ is either free variable or existential variable
        \STATE Compute the set of nodes $\{u_{i_j}\}_{j=1}^{t}$ that are previous to $i$ and adjacency to node $u_i$.
        \STATE Create a list to store projection embedding $L$.
        \FOR{$j \gets$ 1 to $t$}
            \STATE Find the relation $r$ between node $u_i$ and $u_{i_j}$, get the embedding of node $u_{i_j}$ as $E[i_j]$.
            \IF{$E[i_j]$ is not None}
                \IF{The edge between $u_i$ and $u_{i_J}$ is positive}
                    \STATE Compute the embedding of projection($E[i_j], r$), add it to the list $L$.
                \ELSE
                    \STATE Compute the embedding of the negation of the projection($E[i_j], r$), add it to the list $L$.
                \ENDIF
            \ENDIF
        \ENDFOR
        \IF{The list $L$ has no element}
            \STATE $E[i]$ is set to none.
        \ELSIF{The list $L$ has one element}
            \STATE $E[i]=L[0]$
        \ELSE
            \STATE Compute the embedding as the intersection of the embedding in the list $L$, and set $E[i]$ as the outcome.
        \ENDIF
    \ENDIF
\ENDFOR
\RETURN The embedding dictionary $E$ for each node in the query graph.
\end{algorithmic}
\end{algorithm}

\begin{algorithm}[htbp]
\caption{Node ordering on the abstract query graph.}\label{alg:QG node ordering}
\begin{algorithmic}
\REQUIRE The abstract query graph $\mathcal{G}=(V,E,f,g)$, $V$ consists $m$ nodes, $u_1,\cdots,u_m$.
\STATE Creates an empty list $L$ to store the ordering of the node.
\STATE Creates another two set $S_1$ and $S_2$ to store the nodes that are to be explored next, $S_1$ is for existential variables, and $S_2$ is for free variables.
\STATE Create a set $S_3$ to store explored nodes.
\FOR{$i \gets 1$ to $m$ }
    \IF{The type of node $f(u_i)$ is constant entity}
        \STATE list $L$ append the node $u_i$
        \FOR{Node $u_j$ that connects to $u_i$}
            \IF{$f(u_j)$ is existential variable}
                \STATE $u_j$ is added to set $S_1$
            \ELSE
                \STATE $u_j$ is added to set $S_2$
            \ENDIF
            \STATE $u_j$ is added to set $S_3$
        \ENDFOR
    \ENDIF
\ENDFOR
\WHILE{Not all node is included in $L$}
    \IF{Set $S_1$ is not empty}
        \STATE We sort the set $S_1$ by the sum of their distance to every free variable in $\mathcal{G}$, choose the most remote one, and if there is a tie, randomly choose one node, $u_i$ to be the next to explore.
        \STATE We remove $u_i$ from set $S_1$, add $u_i$ to $S_3$.
    \ELSE
        \STATE In this case, we know set $S_2$ is not empty because of the connectivity of $\mathcal{G}$.
        \STATE We randomly choose a node $u_i\in S_2$ to be the next node to explore.
        \STATE We remove $u_i$ from set $S_2$, add $u_i$ to $S_3$.
    \ENDIF
    \FOR{Node $u_j$ that connects to $u_i$}
        \IF{$u_j$ is not in $S_3$}
            \IF{$f(u_j)$ is existential variable}
                \STATE $u_j$ is added to set $S_1$.
            \ELSE
                \STATE $u_j$ is added to set $S_2$.
            \ENDIF
        \ENDIF
        \ENDFOR
    \STATE List $L$ append the node $u_i$
\ENDWHILE

\RETURN The list $L$ as the ordering of nodes in the whole abstract query graph $\mathcal{G}$
\end{algorithmic}
\end{algorithm}

We explain the evaluation protocol in detail for Section~\ref{sec: evaluation for multiple}.

Firstly, we explain the computation of common metrics, including Mean Reciprocal Rank(MRR) and HIT@K, given the full answer $\mathcal{A}$ in the whole knowledge graph and the observed answer $\mathcal{A}_o$ in the observed knowledge graph, we focus on the hard answer $\mathcal{A}_h$ as it requires more than memorizing the observed knowledge graph and serves as the indicator of the capability of reasoning.

Specifically, we rank each hard answer $a\in \mathcal{A}_h$ against all \textbf{non-answers} $\entity - \mathcal{A} - \mathcal{A}_o$, the reason is that we need to neglect other answers so that answers do not interfere with each other, finally, we get the ranking for $a$ as $r$. Then its MRR is $1/r$, and its HIT@k is $\mathbf{1}_{r\leq k}$, thus, the score of a query is the mean of the scores of every its hard answer. We usually compute the score for a query type (which corresponds to an abstract query graph) as the mean score of every query within this type.

As the marginal score and the multiply score have already been explained in Section~\ref{sec: evaluation for multiple}, we only mention one point that it is possible that every free variable does not have marginal hard answer.  Assume that for a query with two free variables, its answer set $\mathcal{A}=\{(a_1,a_2),(a_1,a_3),(a_4,a_2)\}$ and its observed answer set $\mathcal{A}_o=\{(a_1,a_3),(a_4,a_2)\}$. In this case, $a_1,a_4$ is not the marginal hard answer for the first free variable and $a_2,a_3$ is not the marginal hard answer for the second free variable, in general, no free variable has its own marginal hard answer.

Then we only discuss the joint metric, specifically, we only explain how to estimate the joint ranking by the individual ranking of each free variable. For each possible $k$-tuple $(a_1,\cdots,a_k)$, if $a_i$ is ranked as $r_i$ among the \textbf{whole} entity set $\entity$, we compute the score of this tuple as $\Sigma_{i=1}^{k} r_i$, then we sort the whole $\entity^k$ $k$-tuple by their score, for the situation of a tie, we just use the lexicographical order. After the whole joint ranking is got, we use the standard evaluation protocol that ranks each hard answer against all non-answers. It can be confirmed that this estimation method admits a closed-form solution for the sorting in $\entity^k$ space, thus the computation cost is affordable.

We just give the closed-form solution when there are two free variables: 

For the tuple $(r_1,r_2)$, the possible combinations that sum less than $r_1+r_2$ is $\binom{r_1+r_2 -1}{2}$, then, there is $r_1 - 1$ tuple that ranks before $(r_1,r_2)$ because of lexicographical order, thus, the final ranking for the tuple $(r_1,r_2)$ is just $\binom{r_1+r_2-1}{2} + r_1$, that can be computed efficiently. We note we adopt this kind of estimation since current CQA models fail to estimate the answer of multiple variables which have \textbf{combinatorial hardness}, therefore we have to use this way to \textbf{estimate} the joint ranking in the $\entity^{k}$ search space.

\section{Implementation details of CQA models}\label{app:implementation detail of CQA model}
In this section, we provide implementation details of CQA models that have been evaluated in our paper. 
For query embedding methods that rely on the operator tree, including BetaE~\cite{ren_beta_2020}, LogicE~\cite{luus_logic_2021}, and ConE~\cite{zhang_cone_2021}, we compute the ordering of nodes in the query graph in Algorithm~\ref{alg:QG node ordering}, then we compute the embedding for each node in the query graph as explained in Algorithm~\ref{alg:QG embedding}, the final embedding of every free node are gotten to be the predicted answer. Especially, the node ordering we got in Algorithm~\ref{alg:QG node ordering} coincides with the natural topology ordering induced by the directed acyclic operator tree, so we can compute the embedding in the same order as the original implementation. Then, in Algorithm~\ref{alg:QG embedding}, we implement each set operation in the operator tree, including intersection, negation, and set projection. By the merit of the Disjunctive Normal Form (DNF), the union is tackled in the final step. Thus, our implementation is able to coincide with the original implementation in the original dataset~\cite{ren_beta_2020}.

For CQD~\cite{arakelyan_complex_2021} and LMPNN~\cite{wang_logical_2023}, their original implementation does not require the operator tree, so we just use their original implementation. Specifically, in a query graph with multiple free variables, for CQD we predict the answer for each free variable individually as taking others free variables as existential variables, for LMPNN, we just got all embedding of nodes that represent free variables.

For FIT~\cite{yin_existential_2023}, though it is proposed to solve $\textsc{EFO}_1$ queries, it is computationally costly: it has a complexity of $O(\entity^2)$ in the acyclic graphs where $\entity$ is the set of entity of the knowledge graph, and the time complexity of FIT is even not polynomial in the cyclic graphs, the reason is that FIT degrades to enumeration to deal with cyclic graph. In our implementation, we further restrict FIT to at most enumerate 10 possible candidates for each node in the query graph, this practice has allowed FIT to be implemented in the dataset FB15k-237~\cite{toutanova_observed_2015}. However, it cost 20 hours to evaluate FIT on our $\efok$-CQA dataset while other models only need no more than two hours. Moreover, for larger knowledge graph, including NELL~\cite{carlson_toward_2010} and FB15k~\cite{bordes_translating_2013}, we have also encountered an out-of-memory error in a Tesla V100 GPU with 32G memory when implementing FIT, thus, we omit its result in these two knowledge graphs.

\section{Additional experiment result}\label{app: additional experiment result}

In this section, we offer more experiment results not available to be shown in the main paper. For the purpose of supplementation, we select some representative experiment results as the experiment results are extremely complex to be categorized and shown. we present the further benchmark result of the following situations: the situations of different knowledge graphs, including NELL and FB15k, whose results are provided in Appendix~\ref{app:further EFO1} and~\ref{app:EFO2 with more KG}; the situations of more constant entities since we only discuss when there are two constant entities in Table~\ref{tab: EFO2 result}, the results are provided in Appendix~\ref{app: EFO2 with more constants}, and finally, all queries(including the queries without marginal hard answers), in Appendix~\ref{app: EFO2 with more queries}.

We note that we have explained in Section~\ref{sec: evaluation for multiple} and Appendix~\ref{app:evaluation detail} that for a query with multiple free variables, some or all of the free variables may not have their marginal hard answer and thus the marginal metric can not be computed. Therefore, in the result shown in Table~\ref{tab: EFO2 result} in Section~\ref{sec: EFO2 result}, we only conduct evaluation on those queries that both of their free variables have marginal hard answers, and we offer the benchmark result of all queries in Appendix~\ref{app: EFO2 with more queries} where only two kinds of metrics are available.

\subsection{Further benchmark result of $k$=1 in more knowledge graphs}\label{app:further EFO1}
Firstly, we present the benchmark results when there is only one free variable, since the result in FB15k-237 is provided in Table~\ref{tab: EFO1 result}, we provide the result for other standard knowledge graphs, FB15k and NELL, their result is shown in Table~\ref{tab: FB15k EFO1 result} and Table~\ref{tab: NELL EFO1 result}, correspondingly. We note that FIT is out of memory with the two large graphs FB15k and NELL as explained in Appendix~\ref{app:implementation detail of CQA model} and we do not include its result. As FB15k and NELL are both reported to be easier than FB15k-237, the models have better performance. The trend and analysis are generally similar to our discussion in Section~\ref{sec: EFO1 result} with some minor, unimportant changes that LogicE~\cite{luus_logic_2021} has outperformed ConE~\cite{zhang_cone_2021} in the knowledge graph NELL, indicating one model may not perform identically well in all knowledge graphs.

\begin{table}[t]
\caption{MRR scores(\%) for inferring queries with one free variable on FB15k. The notation of $e$, $c$, SDAG, Multi, Cyclic, AVG.($c$) and AVG.($e$) are the same as Table~\ref{tab: EFO1 result}.}
\label{tab: FB15k EFO1 result}
\centering
\footnotesize
\begin{tabular}{cccccccccc}
\toprule
      Model            & \diagbox{$c$}{$e$} & 0   & \multicolumn{2}{c}{1} & \multicolumn{3}{c}
      {2} & \multirow{2}{*}{AVG.($c$)} & \multirow{2}{*}{AVG.} \\
      \cmidrule(lr){3-3} \cmidrule(lr){4-5} \cmidrule(lr){6-8}
                  &   & SDAG & SDAG      & Multi      & SDAG  & Multi & Cyclic &        &     \\
\midrule                  
\multirow{4}{*}{BetaE} & 1 &38.6&30.4&29.2&21.7&21.7&24.1&24.3&\multirow{3}{*}{34.0}   \\
                  & 2 &49.7&34.0&37.2&28.3&29.2&35.5&31.0&    \\
                  & 3 &63.5&46.4&48.6&33.9&36.1&45.8&38.1&  \\\cmidrule(lr){2-8}
                  & AVG.($e$)&63.5&46.4&48.6&33.9&36.1&45.8&38.1&\\
\midrule                  
\multirow{4}{*}{LogicE} & 1 &46.0&33.8&32.1&23.3&22.8&25.6&26.2&\multirow{3}{*}{35.6}   \\
                  & 2 &51.2&35.9&39.0&30.6&30.5&36.9&32.7&   \\
                  & 3 &64.5&48.6&49.8&35.4&37.5&47.7&39.6&\\  \cmidrule(lr){2-8}
                  & AVG.($e$)&54.9&41.7&42.3&32.8&33.4&40.4&\\
\midrule                  
\multirow{4}{*}{ConE} & 1 &52.5&35.8&34.9&25.9&25.9&29.5&29.3&\multirow{4}{*}{39.5}   \\
                  & 2 &57.0&40.0&43.4&33.2&34.2&40.8&36.3&    \\
                  & 3 &70.6&53.1&55.3&39.3&41.8&52.5&43.9&  \\ \cmidrule(lr){2-8}
                  & AVG.($e$)&61.0&45.6&46.8&36.1&37.4&44.8&\\
\midrule        
\multirow{4}{*}{CQD} 
& 1 &74.6&36.1&32.7&17.6&16.7&25.4&23.7&\multirow{3}{*}{37.2}   \\
                  & 2 &52.2&35.2&40.9&29.2&31.5&39.2&33.2    \\
                  & 3 &53.3&32.4&33.1&21.7&21.6&37.4&24.8  \\\cmidrule(lr){2-8}
                  & AVG.($e$)&59.4&41.5&44.6&33.3&35.3&43.3\\
\midrule
\multirow{4}{*}{LMPNN} & 1 &63.7&39.9&35.3&28.7&26.4&28.7&30.7&\multirow{4}{*}{37.7}   \\
                  & 2 &65.0&41.9&38.8&34.4&31.7&38.4&35.1 &   \\
                  & 3 &79.8&54.0&49.5&38.9&37.1&48.0&40.8 & \\ \cmidrule(lr){2-8}
                  & AVG.($e$)&70.2&47.4&42.8&36.6&34.1&41.6& \\
\bottomrule
\end{tabular}
\vspace{-1em}
\end{table}

\begin{table}[t]
\caption{MRR scores(\%) for inferring queries with one free variable on NELL. The notation of $e$, $c$, SDAG, Multi, Cyclic, AVG.($c$) and AVG.($e$) are the same as Table~\ref{tab: EFO1 result}.}
\label{tab: NELL EFO1 result}
\centering
\footnotesize
\begin{tabular}{cccccccccc}
\toprule
      Model            & \diagbox{$c$}{$e$} & 0   & \multicolumn{2}{c}{1} & \multicolumn{3}{c}
      {2} & \multirow{2}{*}{AVG.($c$)} & \multirow{2}{*}{AVG.} \\
      \cmidrule(lr){3-3} \cmidrule(lr){4-5} \cmidrule(lr){6-8}
                  &   & SDAG & SDAG      & Multi      & SDAG  & Multi & Cyclic &        &     \\
\midrule                  
\multirow{4}{*}{BetaE} & 1 &13.9&26.4&35.0&8.6&14.9&19.1&17.5&\multirow{3}{*}{33.6}   \\
                  & 2 &58.8&31.5&43.8&22.4&30.6&34.7&30.7    \\
                  & 3 &78.8&48.6&58.3&29.6&39.0&47.0&39.5  \\\cmidrule(lr){2-8}
                  & AVG.($e$)&53.1&38.5&48.3&25.2&33.3&38.2\\
\midrule                  
\multirow{4}{*}{LogicE} & 1 &18.3&29.2&39.6&12.1&19.0&20.4&21.1&\multirow{3}{*}{36.9}   \\
                  & 2 &63.5&34.4&47.3&26.4&34.0&37.6&34.2 &   \\
                  & 3 &79.6&51.2&59.3&33.1&42.2&50.1&42.6  &\\  \cmidrule(lr){2-8}
                  & AVG.($e$)&56.3&41.3&50.9&28.8&36.7&41.0&\\
\midrule                  
\multirow{4}{*}{ConE} & 1 &16.7&26.9&36.6&11.1&16.9&22.3&19.6&\multirow{4}{*}{36.6}   \\
                  & 2 &60.5&33.6&46.6&25.3&33.1&40.1&33.6    \\
                  & 3 &79.9&50.6&59.2&33.2&42.2&52.6&42.8  \\ \cmidrule(lr){2-8}
                  & AVG.($e$)&54.9&40.3&50.0&28.4&36.2&43.4&\\
\midrule        
\multirow{4}{*}{CQD} 
& 1 &22.3&30.6&37.3&13.3&17.9&20.7&20.9&\multirow{3}{*}{38.2}   \\
                  & 2 &59.8&34.0&45.2&28.8&35.4&38.9&35.3    \\
                  & 3 &62.7&48.8&59.9&36.4&44.1&52.6&44.3  \\\cmidrule(lr){2-8}
                  & AVG.($e$)&50.1&40.2&49.9&31.6&38.1&42.7\\
\midrule
\multirow{4}{*}{LMPNN} & 1 &20.7&29.8&33.3&13.4&16.5&21.8&19.8&\multirow{4}{*}{35.1}   \\
                  & 2 &63.5&35.4&43.3&27.0&30.2&37.6&32.3 &   \\
                  & 3 &80.8&50.7&56.0&33.6&39.2&47.6&40.7 & \\ \cmidrule(lr){2-8}
                  & AVG.($e$)&57.4&41.5&46.7&29.4&33.6&40.0& \\
\bottomrule
\end{tabular}
\end{table}

\subsection{Further benchmark result for $k$=2 in more knowledge graphs}\label{app:EFO2 with more KG}
Then, similar to Section~\ref{sec: EFO2 result}, we provide the result for other standard knowledge graphs, FB15k and NELL, when the number of constant entities is fixed to two, their result is shown in Table~\ref{tab: FB15k EFO2 result} and Table~\ref{tab: NELL EFO2 result}, correspondingly.

We note that though in some breakdowns, the marginal score is over 90 percent, almost close to 100 percent, the joint score is pretty slow, which further corroborates our findings that joint metric is significantly harder and more challenging in Section~\ref{sec: EFO2 result}.

\begin{table}[t]
\centering
\caption{HIT@10 scores(\%) of three different types for answering queries with two free variables on FB15k. The constant number is fixed to be two. The notation of $e$, SDAG, Multi, and Cyclic is the same as Table~\ref{tab: EFO2 result}.}
\label{tab: FB15k EFO2 result}
\footnotesize
\begin{tabular}{ccrrrrrrrrr}
\toprule
\multirow{2}{*}{Model}  & \multirow{2}{*}{\shortstack[c]{HIT@10\\ Type}}  & \multicolumn{2}{c}{$e=0$} & \multicolumn{3}{c}{$e=1$} & \multicolumn{3}{c}{$e=2$} & \multirow{2}{*}{AVG.} \\
\cmidrule(lr){3-4} \cmidrule(lr){5-7} \cmidrule(lr){8-10} & & SDAG      & Multi &  SDAG  & Multi & Cyclic &  SDAG  & Multi & Cyclic & \\
\midrule
\multirow{3}{*}{BetaE} 
& Marginal&76.9&77.2&68.9&69.3&75.1&55.0&57.4&73.6&63.6 \\
& Multiply&41.7&41.6&31.7&31.0&38.7&25.2&25.9&36.1&29.7\\
& Joint&11.6&13.7&8.7&8.6&17.8&4.9&5.4&14.3&8.4\\
\midrule
\multirow{3}{*}{LogicE} 
& Marginal&82.9&80.9&73.6&72.9&76.6&58.9&60.7&75.7&66.9 \\
& Multiply&47.5&45.0&36.3&34.1&40.4&28.5&29.0&38.0&32.7\\
& Joint&12.7&13.9&10.0&9.9&19.2&6.1&6.5&15.9&9.6 \\
\midrule
\multirow{3}{*}{ConE} 
& Marginal&84.1&84.8&76.5&76.3&81.4&61.8&63.8&79.7&70.2 \\
& Multiply&48.7&48.1&37.7&35.9&44.2&29.9&30.4&41.4&34.6\\
& Joint&14.2&15.6&10.3&10.4&20.6&6.2&6.6&16.9&10.1 \\
\midrule
\multirow{3}{*}{CQD} 
& Marginal&73.8&76.8&69.0&71.9&76.3&51.1&54.4&77.0&62.9 \\
& Multiply  &45.0&46.6&37.4&36.9&43.9&28.1&29.2&41.9&34.0\\
& Joint   &17.1&19.0&13.1&13.0&20.6&7.7&8.6&18.1&11.9  \\
\midrule
\multirow{3}{*}{LMPNN} 
& Marginal&89.2&80.1&80.3&78.2&84.2&65.6&63.7&80.2&71.3 \\
& Multiply   &56.6&50.5&45.7&42.4&49.0&37.6&34.8&44.6&39.7\\
& Joint   &18.9&17.2&12.9&12.4&22.4&8.0&7.5&16.9&11.2  \\
\bottomrule
\end{tabular}
\vspace{-1em}
\end{table}

\begin{table}[t]
\centering
\caption{HIT@10 scores(\%) of three different types for answering queries with two free variables on NELL. The constant number is fixed to be two. The notation of $e$, SDAG, Multi, and Cyclic is the same as Table~\ref{tab: EFO2 result}.}
\label{tab: NELL EFO2 result}
\footnotesize
\begin{tabular}{ccrrrrrrrrr}
\toprule
\multirow{2}{*}{Model}  & \multirow{2}{*}{\shortstack[c]{HIT@10\\ Type}}  & \multicolumn{2}{c}{$e=0$} & \multicolumn{3}{c}{$e=1$} & \multicolumn{3}{c}{$e=2$} & \multirow{2}{*}{AVG.} \\
\cmidrule(lr){3-4} \cmidrule(lr){5-7} \cmidrule(lr){8-10} & & SDAG      & Multi &  SDAG  & Multi & Cyclic &  SDAG  & Multi & Cyclic & \\
\midrule
\multirow{3}{*}{BetaE} 
& Marginal&81.3&95.9&72.8&85.5&79.9&57.2&66.7&77.0&71.2 \\
& Multiply&48.2&56.7&41.3&46.1&47.6&33.1&36.5&42.9&39.6\\
& Joint&19.2&31.8&21.2&26.5&21.7&13.8&17.5&18.5&18.8\\
\midrule
\multirow{3}{*}{LogicE} 
& Marginal&87.1&99.8&81.0&91.8&83.2&65.7&74.0&81.0&77.7 \\
& Multiply&52.5&60.3&47.6&51.7&50.2&39.4&42.6&46.0&44.8\\
& Joint&21.1&32.8&25.4&30.5&23.3&18.0&21.5&20.5&22.3\\
\midrule
\multirow{3}{*}{ConE} 
& Marginal&82.6&96.4&76.0&87.8&88.1&60.0&69.3&83.0&74.7 \\
& Multiply&48.7&56.9&41.9&46.3&52.2&34.5&38.1&47.7&41.7\\
& Joint&17.0&30.9&19.3&25.0&24.9&12.9&17.2&20.3&18.8 \\
\midrule
\multirow{3}{*}{CQD} 
& Marginal&79.5&96.3&83.2&92.2&83.5&65.8&75.7&84.8&79.4 \\
& Multiply  &49.2&57.8&51.1&53.1&51.4&40.6&45.1&50.6&47.4\\
& Joint   &23.0&38.0&29.7&34.2&26.4&21.4&25.4&24.0&26.0  \\
\midrule
\multirow{3}{*}{LMPNN} 
& Marginal&88.5&96.6&81.5&90.9&85.3&65.0&70.7&83.1&76.7 \\
& Multiply   &55.7&62.4&50.3&53.3&54.0&40.8&42.6&50.3&46.5\\
& Joint   &23.4&36.4&25.5&29.4&24.0&16.6&19.7&21.5&21.5  \\
\bottomrule
\end{tabular}
\vspace{-1em}
\end{table}

\subsection{Further benchmark result for $k$=2 with more constant numbers.}\label{app: EFO2 with more constants}

As the experiment in Section~\ref{sec: EFO2 result} only contains the situation where the number of constant entity is fixed as one, we offer the further experiment result in Table~\ref{tab: FB15k-237 EFO2 result with one constant}.

The result shows that models perform worse with fewer constant variables when compares to the result in Table~\ref{tab: EFO2 result}, this observation is the same as the previous result with one free variable that has been discussed in Section~\ref{sec: EFO1 result}.

\begin{table}[t]
\centering
\caption{HIT@10 scores(\%) of three different types for answering queries with two free variables on FB15k-237. The constant number is fixed to be one. The notation of $e$, SDAG, Multi, and Cyclic is the same as Table~\ref{tab: EFO2 result}.}
\label{tab: FB15k-237 EFO2 result with one constant}
\footnotesize
\begin{tabular}{ccrrrrrrrrr}
\toprule
\multirow{2}{*}{Model}  & \multirow{2}{*}{\shortstack[c]{HIT@10\\ Type}}  & \multicolumn{2}{c}{$e=0$} & \multicolumn{3}{c}{$e=1$} & \multicolumn{3}{c}{$e=2$} & \multirow{2}{*}{AVG.} \\
\cmidrule(lr){3-4} \cmidrule(lr){5-7} \cmidrule(lr){8-10} & & SDAG      & Multi &  SDAG  & Multi & Cyclic &  SDAG  & Multi & Cyclic & \\
\midrule
\multirow{3}{*}{BetaE} 
& Marginal&37.5&29.7&33.4&28.1&35.6&30.0&25.9&41.2&31.2 \\
& Multiply&18.9&13.7&15.3&10.3&15.2&17.7&13.3&17.2&14.3\\
& Joint&0.9&1.1&1.4&0.9&3.3&1.1&0.9&3.9&1.7\\
\midrule
\multirow{3}{*}{LogicE} 
& Marginal&40.6&30.7&36.0&29.1&34.6&29.8&25.3&41.5&31.4 \\
& Multiply&21.1&14.3&17.2&10.9&16.3&17.8&13.3&17.5&14.7\\
& Joint&1.4&1.4&1.6&0.9&3.7&1.4&1.0&4.3&1.9 \\
\midrule
\multirow{3}{*}{ConE} 
& Marginal&40.8&32.4&37.3&30.4&40.7&31.1&26.9&45.0&33.5 \\
& Multiply&22.1&15.2&18.4&11.7&19.3&18.5&14.8&20.9&16.5\\
& Joint&1.4&1.0&1.7&1.0&4.3&1.4&1.0&4.4&2.0 \\
\midrule
\multirow{3}{*}{CQD} 
& Marginal&73.8&76.8&69.0&71.9&76.3&51.1&54.4&77.0&62.9 \\
& Multiply  &23.3&9.1&18.5&9.2&16.2&14.6&9.2&19.1&12.9\\
& Joint   &1.5&0.6&2.0&1.1&3.4&1.5&0.9&4.4&1.9  \\
\midrule
\multirow{3}{*}{LMPNN} 
& Marginal&39.0&27.6&40.0&29.5&39.3&30.6&24.8&42.7&32.0 \\
& Multiply   &25.1&13.9&24.3&13.3&21.6&20.0&14.0&21.1&17.1\\
& Joint   &1.6&1.3&2.5&1.3&3.9&1.5&1.0&4.0&2.0  \\
\bottomrule
\end{tabular}
\vspace{-1em}
\end{table}

\subsection{Further benchmark result for $k$=2 including all queries}\label{app: EFO2 with more queries}
Finally, as we have explained in Section~\ref{sec: evaluation for multiple} and Appendix~\ref{app:evaluation detail}, there are some valid $\efok$ queries without marginal hard answers when $k>1$. Thus, there is no way to calculate the marginal scores, all our previous experiments are therefore only conducted on those queries that all their free variables have marginal hard answers. In this section, we only present the result of the Multiply and Joint score, as they can be computed for any valid $\efok$ queries, and therefore this experiment is conducted on the whole $\efok$-CQA dataset.

We follow the practice in Section~\ref{sec: EFO2 result} that fixed the number of constant entities as two, as the impact of constant entities is pretty clear, which has been further corroborated in Appendix~\ref{app: EFO2 with more constants}. The experiments are conducted on all three knowledge graphs, FB15k-237, FB15k, and NELL, the result is shown in Table~\ref{tab: FB15k-237 EFO2 result with all queries}, Table~\ref{tab: FB15k EFO2 result with all queries}, and Table~\ref{tab: NELL EFO2 result with all queries}, correspondingly.

Interestingly, comparing the result in Table~\ref{tab: EFO2 result} and  Table~\ref{tab: FB15k-237 EFO2 result with all queries}, the multiple scores actually increase through the joint scores are similar. This may be explained by the fact that if one free variable has no marginal hard answer, then it can be easily predicted, leading to a better performance for the whole query.

\begin{table}[t]
\centering
\caption{HIT@10 scores(\%) of two different types for answering queries with two free variables on FB15k-237(including queries without the marginal hard answer). The constant number is fixed to be two. The notation of $e$, SDAG, Multi, and Cyclic is the same as Table~\ref{tab: EFO2 result}.}
\label{tab: FB15k-237 EFO2 result with all queries}
\footnotesize
\begin{tabular}{ccrrrrrrrrr}
\toprule
\multirow{2}{*}{Model}  & \multirow{2}{*}{\shortstack[c]{HIT@10\\ Type}}  & \multicolumn{2}{c}{$e=0$} & \multicolumn{3}{c}{$e=1$} & \multicolumn{3}{c}{$e=2$} & \multirow{2}{*}{AVG.} \\
\cmidrule(lr){3-4} \cmidrule(lr){5-7} \cmidrule(lr){8-10} & & SDAG      & Multi &  SDAG  & Multi & Cyclic &  SDAG  & Multi & Cyclic & \\
\midrule
\multirow{3}{*}{BetaE} 
& Multiply&29.1&29.1&18.3&37.5&10.4&28.0&93.6&74.6&24.1\\
& Joint&2.1&2.2&1.7&3.0&2.4&1.8&5.8&14.2&4.6\\
\midrule
\multirow{3}{*}{LogicE} 
& Multiply&31.6&32.9&19.8&39.6&10.9&28.7&96.3&73.8&25.4\\
& Joint&2.6&2.5&2.1&3.1&2.5&2.2&6.4&15.6&5.0 \\
\midrule
\multirow{3}{*}{ConE} 
& Multiply&32.6&31.9&20.5&41.0&12.6&29.0&99.7&86.8&27.0\\
& Joint&3.0&2.1&1.9&3.3&2.7&2.2&6.6&16.8&5.4 \\
\midrule
\multirow{3}{*}{CQD} 
& Multiply  &34.5&23.4&22.3&36.8&10.6&26.4&75.3&77.3&25.6\\
& Joint   &2.9&1.4&2.1&3.3&2.3&2.0&5.0&15.0&5.6  \\
\midrule
\multirow{3}{*}{LMPNN} 
& Multiply   &36.8&29.3&27.5&45.8&13.9&31.2&97.0&86.5&27.9\\
& Joint   &2.7&2.2&2.7&3.9&2.5&2.1&5.8&14.6&5.0  \\
\midrule
\multirow{3}{*}{FIT} 
& Multiply &41.5&44.4&28.9&56.8&10.2&39.4&139.7&100.3&35.0\\
& Joint&2.4&2.3&2.1&3.4&1.6&2.2&7.4&15.4&5.9\\
\bottomrule
\end{tabular}
\end{table}

\begin{table}[t]
\centering
\caption{HIT@10 scores(\%) of two different types for answering queries with two free variables on FB15k(including queries without the marginal hard answer). The constant number is fixed to be two. The notation of $e$, SDAG, Multi, and Cyclic is the same as Table~\ref{tab: EFO2 result}.}
\label{tab: FB15k EFO2 result with all queries}
\footnotesize
\begin{tabular}{ccrrrrrrrrr}
\toprule
\multirow{2}{*}{Model}  & \multirow{2}{*}{\shortstack[c]{HIT@10\\ Type}}  & \multicolumn{2}{c}{$e=0$} & \multicolumn{3}{c}{$e=1$} & \multicolumn{3}{c}{$e=2$} & \multirow{2}{*}{AVG.} \\
\cmidrule(lr){3-4} \cmidrule(lr){5-7} \cmidrule(lr){8-10} & & SDAG      & Multi &  SDAG  & Multi & Cyclic &  SDAG  & Multi & Cyclic & \\
\midrule
\multirow{3}{*}{BetaE} 
& Multiply&42.1&57.2&26.5&66.5&15.5&34.6&134.9&100.0&35.0\\
& Joint&6.6&9.4&4.5&10.2&4.6&4.3&16.7&26.0&9.2\\
\midrule
\multirow{3}{*}{LogicE} 
& Multiply&48.2&65.6&31.0&71.6&16.8&37.8&143.9&105.8&38.1\\
& Joint&7.5&11.2&5.6&12.5&5.3&5.6&20.4&28.5&10.5\\
\midrule
\multirow{3}{*}{ConE} 
& Multiply&50.2&72.2&32.8&74.6&18.3&38.3&149.3&114.3&40.4\\
& Joint&6.8&10.0&5.2&12.5&5.5&5.2&19.4&30.4&11.0\\
\midrule
\multirow{3}{*}{CQD} 
& Multiply  &48.1&55.9&31.9&69.0&15.8&29.5&93.5&103.2&37.6\\
& Joint   &9.4&11.4&6.6&14.8&4.8&5.5&17.5&27.2&12.0\\
\midrule
\multirow{3}{*}{LMPNN} 
& Multiply&58.4&79.5&43.1&94.6&21.3&40.9&146.2&135.9&45.0\\
& Joint&8.6&12.9&6.8&15.6&6.2&5.4&19.3&31.7&11.6\\
\bottomrule
\end{tabular}
\vspace{-1em}
\end{table}

\begin{table}[t]
\centering
\caption{HIT@10 scores(\%) of two different types for answering queries with two free variables on NELL(including queries without the marginal hard answer). The constant number is fixed to be two. The notation of $e$, SDAG, Multi, and Cyclic is the same as Table~\ref{tab: EFO2 result}.}
\label{tab: NELL EFO2 result with all queries}
\footnotesize
\begin{tabular}{ccrrrrrrrrr}
\toprule
\multirow{2}{*}{Model}  & \multirow{2}{*}{\shortstack[c]{HIT@10\\ Type}}  & \multicolumn{2}{c}{$e=0$} & \multicolumn{3}{c}{$e=1$} & \multicolumn{3}{c}{$e=2$} & \multirow{2}{*}{AVG.} \\
\cmidrule(lr){3-4} \cmidrule(lr){5-7} \cmidrule(lr){8-10} & & SDAG      & Multi &  SDAG  & Multi & Cyclic &  SDAG  & Multi & Cyclic & \\
\midrule
\multirow{3}{*}{BetaE} 
& Multiply&21.2&47.3&22.0&51.9&14.7&24.1&80.5&79.7&33.4\\
& Joint&4.2&19.6&6.8&19.1&5.1&6.8&26.7&24.0&14.1\\
\midrule
\multirow{3}{*}{LogicE} 
& Multiply&26.6&52.8&28.8&63.4&16.0&32.8&103.1&88.5&38.9\\
& Joint&3.8&21.5&9.7&26.0&5.9&11.5&36.9&27.3&16.5\\
\midrule
\multirow{3}{*}{ConE} 
& Multiply&25.3&51.4&23.9&53.9&16.9&27.3&90.7&90.6&36.7\\
& Joint&3.4&20.2&6.4&17.0&6.1&7.2&27.0&27.1&14.2\\
\midrule
\multirow{3}{*}{CQD} 
& Multiply&30.3&48.9&30.6&64.3&15.9&33.1&88.9&91.2&40.9\\
& Joint&4.4&21.9&9.8&27.5&5.6&12.0&37.6&28.1&18.0\\
\midrule
\multirow{3}{*}{LMPNN} 
& Multiply&33.4&58.3&33.7&65.3&19.4&30.7&85.1&105.0&41.8\\
& Joint&4.4&23.7&10.0&21.9&5.8&8.2&23.2&28.8&15.7\\
\bottomrule
\end{tabular}
\end{table}

\end{document}